\DeclareSymbolFontAlphabet{\mathbb}{AMSb}
\DeclareSymbolFontAlphabet{\mathbbl}{bbold}
\newtheorem{proposition}{Proposition}
\newtheorem{definition}{Definition}
\newtheorem{lemma}{Lemma}
\newcommand{\cp}{\citep}
\newcommand{\ca}{\citeauthor}
\newcommand{\ct}{\citet}
\newcommand{\ctp}[1]{\ca{#1}'s~\cp{#1}}
\newcommand{\R}{\ensuremath{\mathbb{R}}}
\newcommand{\E}{\ensuremath{\mathbb{E}}}
\newcommand{\set}[1]{\ensuremath{\mathcal{#1}}}
\renewcommand{\S}{\set{S}}
\newcommand{\A}{\set{A}}
\newcommand{\V}{\set{V}}
\newcommand{\T}{\set{T}}
\newcommand{\Tpi}{\ensuremath{\T_\pi}}
\newcommand{\M}{\set{M}}
\newcommand{\PP}{\set{P}}
\newcommand{\PS}{\ensuremath{\mathbbl{\Pi}}}
\newcommand{\PSdet}{\ensuremath{\mathbbl{\Pi}_{\mathrm{det}}}}
\newcommand{\FS}{\ensuremath{\mathbbl{{V}}}}
\newcommand{\MS}{\ensuremath{\mathbbl{{M}}}}
\newcommand{\mat}[1]{\ensuremath{\boldsymbol{{#1}}}}
\renewcommand{\v}{\mat{v}}
\newcommand{\defi}{\ensuremath{\equiv}}
\newcommand{\apx}[1]{\ensuremath{\tilde{#1}}}
\newcommand{\pt}{\apx{p}}
\newcommand{\rt}{\apx{r}}
\newcommand{\mt}{\apx{m}}
\newcommand{\vt}{\apx{v}}
\newcommand{\Tpit}{\ensuremath{\apx{\T}_{\pi}}}
\newcommand{\citeproxy}[1]{\textbf{[?]}}
\newcommand{\loss}{\ensuremath{\ell}}
\title{Proper Value Equivalence}
\author{%
Christopher Grimm \\
Computer Science \& Engineering \\
University of Michigan \\
\texttt{crgrimm@umich.edu}
\And
\textbf{Andr\'{e} Barreto, Gregory Farquhar,}\\ \textbf{David Silver, Satinder Singh} \\
DeepMind \\
\hspace{-35pt}\texttt{\{andrebarreto,gregfar,}\\ \hspace{20pt}\texttt{davidsilver,baveja\}@google.com}
  % examples of more authors
  % \And
  % Coauthor \\
  % Affiliation \\
  % Address \\
  % \texttt{email} \\
  % \AND
  % Coauthor \\
  % Affiliation \\
  % Address \\
  % \texttt{email} \\
  % \And
  % Coauthor \\
  % Affiliation \\
  % Address \\
  % \texttt{email} \\
  % \And
  % Coauthor \\
  % Affiliation \\
  % Address \\
  % \texttt{email} \\
}
\begin{document}

\maketitle

\begin{abstract}

One of the main challenges in model-based reinforcement learning (RL) is to decide which aspects of the environment should be modeled. The value-equivalence (VE) principle proposes a simple answer to this question: a model should capture the aspects of the environment that are relevant for value-based planning. Technically, VE distinguishes models based on a set of policies and a set of functions: a model is said to be VE to the environment if the Bellman operators it induces for the policies yield the correct result when applied to the functions. As the number of policies and functions increase, the set of VE models shrinks, eventually collapsing to a single point corresponding to a perfect model. A fundamental question underlying the VE principle is thus how to select the smallest sets of policies and functions that are sufficient for planning. In this paper we take an important step towards answering this question. We start by generalizing the concept of VE to order-$k$ counterparts defined with respect to $k$ applications of the Bellman operator. This leads to a family of VE classes that increase in size as $k \rightarrow \infty$. In the limit, all functions become value functions, and we have a special instantiation of VE which we call proper VE or simply PVE. Unlike VE, the PVE class may contain multiple models even in the limit when all value functions are used. Crucially, all these models are sufficient for planning, meaning that they will yield an optimal policy despite the fact that they may ignore many aspects of the environment. We construct a loss function for learning PVE models and argue that popular algorithms such as MuZero can be understood as minimizing an upper bound for this loss. We leverage this connection to propose a modification to MuZero and show that it can lead to improved performance in practice.
\end{abstract}

\section{Introduction}

It has long been argued that, in order for reinforcement learning (RL) agents to solve truly complex tasks, they must build a model of the environment that allows for counterfactual reasoning~\cp{russel2003artificial}. Since representing the world in all its complexity is a hopeless endeavor, especially under capacity constraints, the agent must be able to ignore aspects of the environment that are irrelevant for its purposes. This is the premise behind the \emph{value equivalence} (VE) principle, which provides a formalism for focusing on the aspects of the environment that are crucial for value-based planning~\cp{grimm2020value}. 

VE distinguishes models based on a set of policies and a set of real-valued scalar functions of state (henceforth, just functions). Roughly, a model is said to be VE to the environment if the Bellman operators it induces for the policies yield the same result as the environment's Bellman operators when applied to the functions. The policies and functions thus become a ``language'' to specify which parts of the environment a model should capture. As the number of policies and functions increase the requirements on the model become more stringent, which is to say that the class of VE models shrinks. In the limit, the VE class collapses to a single point corresponding to a perfect model. Although this result is reassuring, in practice we want to stop short of collapsing---after all, at this point the agent is no longer ignoring irrelevant aspects of the environment.

A fundamental question is thus how to select the smallest sets of policies and functions such that a resulting VE model is sufficient for planning. 
%\ct{grimm2020value} took important initial steps towards answering this question. They showed that being VE with respect to a set of functions also means being VE with respect to all their linear combinations. They also pointed out that an agent should only care to be VE with respect to policies and functions it can represent. Both results help to narrow down the aspects of the environment that should be captured by the model.
In this paper we take an important additional step in this direction: we show that the VE principle can be formulated with respect to \emph{value functions} only. This result drastically reduces the space of functions that must be considered by VE, as in general only a small fraction of the set of all functions will qualify as value functions in a given environment. Since every policy has an associated value function, this new formulation of VE removes the need for selecting functions, only requiring policies. We name our new formulation \emph{proper value equivalence} (PVE) to emphasize its explicit use of value functions.

PVE has several desirable properties. Unlike with VE, the class of PVE models does not collapse to a singleton in the limit. This means that, even if \emph{all} value functions are used, we generally end up with multiple PVE models---which can be beneficial if some of these are easier to learn or represent than others. Crucially, all of these models are sufficient for planning, meaning that \emph{they will yield an optimal policy despite the fact that they may ignore many aspects of the environment}.
%We arrive at the concept of PVE through a generalization of VE in which one-step Bellman updates are replaced by $n$-step counterparts. VE and PVE then become the extremes of this spectrum corresponding to $n=1$ and $n = \infty$. Interestingly, the two extreme versions of VE share a property that is not shared by any intermediate instantiation: they induce sets of models that are convex in a well defined sense. We leverage this property to analyze the fixed points of proposed VE and PVE losses using a geometric interpretation in which models are embedded into a properly-defined metric space. Our analysis supports the use of popular methods used in practice, like stochastic gradient descent~\cp{someone} \textcolor{red}{THIS IS TOO VAGUE.. SGD TO DO WHAT?}.

Finally, we make more precise \ctp{grimm2020value} suggestion that the VE principle may help explain the good empirical performance of several modern algorithms~\cp{tamar2016value,silver2017predictron,oh2017value,farquhar2018treeqn,schrittwieser2019mastering}. Specifically, we show that, with mild assumptions, minimizing the loss of the MuZero algorithm  \cp{schrittwieser2020mastering} can be understood as minimizing a PVE error. We then leverage this connection to suggest a modification to MuZero and show a small but significant improvement in the Atari Learning Environment~\citep{bellemare2013arcade}. 

\section{Background}

The agent's interaction with the environment will be modeled as a \emph{Markov decision process} (MDP) $\mathcal{M} \defi \langle \S, \A, r, p, \gamma \rangle$, where $\S$ and $\A$ are the state and action spaces, $r(s,a)$ is the expected reward following taking $a$ from $s$, $p(s' | s, a)$ is the transition kernel and $\gamma \in [0,1)$ is a discount factor~\cp{puterman94markov}. 
A \emph{policy} is a mapping $\pi: \S \mapsto \PP(\A)$, where $\PP(\A)$ is the space of probability distributions over \A; we define $\PS \defi \{\pi \,|\,  \pi: \S \mapsto \PP(\A)\}$ as the set of all possible policies. A policy $\pi$ is \emph{deterministic} if $\pi(a|s) > 0$ for only one action $a$ per state $s$. A policy's \emph{value function} is defined as
\begin{equation} 
\label{eq:v}
v_\pi(s) \defi \E_{\pi} \Big[ \sum_{i=0}^{\infty} \gamma^{i} r(S_{t+i}, A_{t+i}) \,|\, S_{t} = s \Big],
\end{equation}
where $\E_{\pi}[\cdot]$ denotes expectation over the trajectories induced by $\pi$ and the random variables $S_t$ and $A_t$ indicate the state occupied and the action selected by the agent at time step $t$. 

The agent's goal is to find a policy $\pi \in \PS$ that maximizes the value of every state~\cp{sutton2018reinforcement,szepesvari2010algorithms}. Usually, a crucial step to carry out this search is to compute the value function of candidate policies. This process can be cast in terms of the policy's \emph{Bellman operator}:
\begin{equation}
\label{eq:bo}
\Tpi[v](s) \defi \E_{A \sim \pi(\cdot | s), S' \sim  p(\cdot | s, A)} \left[r(s,A)  + \gamma v(S')\right],
\end{equation}
where $v$ is any function in the space $\FS \defi \{f \,|\, f: \S \mapsto \R\}$. It is known that $\lim_{n \rightarrow \infty} \Tpi^n v = v_{\pi}$, that is, starting from any $v \in \FS$, the repeated application of \Tpi\ will eventually converge to $v_{\pi}$.
Since in RL the agent does not know $p$ and $r$, it cannot apply~\eqref{eq:bo} directly. One solution is to learn a \emph{model} $\tilde{m} \defi (\rt,\pt)$ and use it to compute~(\ref{eq:bo}) with $p$ and $r$ replaced by \pt\ and \rt~\cp{sutton2018reinforcement}. We denote the set of all models as $\MS$.

The \emph{value equivalence principle} defines a model as value equivalent (VE) to the environment $m^* \defi (r,p)$ with respect to a set of policies $\Pi$ and a set of functions \V\ if it produces the same Bellman updates as $m^*$ when using $\Pi$ and \V~\cp{grimm2020value}. Classes of such models are expressed as follows:
\begin{equation}
\mathcal{M}(\Pi, \mathcal{V}) \defi \{ \tilde{m} \in \mathcal{M} : \tilde{\Tpi} v = \Tpi v \ \forall \pi \in \Pi, v \in \mathcal{V} \}
\end{equation}
where $\mathcal{M} \subseteq \MS$ is a class of models, $\tilde{\Tpi} $ denotes one application of the Bellman operator induced by model $\tilde{m}$ and policy $\pi$ to function $v$, and $\Tpi$ is environment's Bellman operator for $\pi$.

\citet{grimm2020value} showed that the VE principle can be used to learn models that disregard aspects of the environment which are not related to the task of interest.\footnote{A related approach is taken in value-aware model learning~\citep{farahmand2017value} which minimizes the discrepancy between the Bellman optimality operators induced by the model and the environment.} Classical approaches to model learning do not take the eventual use of the model into account, potentially modeling irrelevant aspects of the environment. Accordingly, \citet{grimm2020value} have shown that, under the same capacity constraints, models learned using VE can outperform their classical counterparts.
% \vspace{-10pt}
\section{Proper value equivalence}
\label{sec:pve}
One can define a spectrum of VE classes corresponding to different numbers of applications of the Bellman operator. We define an order-$k$ VE class as:
\begin{equation}
\label{eq:ve_sequence}
\mathcal{M}^k(\Pi, \mathcal{V}) \defi \{ \tilde{m} \in \mathcal{M} : \tilde{\mathcal{T}}_\pi^k v = \mathcal{T}^k_\pi v \, \, \forall \pi \in \Pi, v \in \mathcal{V} \}
\end{equation}
where $\tilde{\mathcal{T}}_\pi^k v$ denotes $k$ applications of $\tilde{\mathcal{T}}_\pi$ to $v$. Under our generalized definition of VE, \citet{grimm2020value} studied order-one VE classes of the form $\mathcal{M}^{1}(\Pi, \mathcal{V})$. 
They have shown that $\mathcal{M}^1(\mathbbl{\Pi}, \mathbbl{V})$ either contains only the environment or is empty.
This is not generally true for $k > 1$. 
The limiting behavior of order-$k$ value equivalent classes can be described as follows
\begin{restatable}{proposition}{moduloResult}
\label{property:limiting_model_classes}
Let $\mathcal{V}$ be a set of functions such that if $v \in \mathcal{V}$ then $\mathcal{T}_{\pi} v \in \mathcal{V}$ for all $\pi \in \Pi$.
Then, for $k, K \in \mathbb{Z}^+$ such that $k$ divides $K$, it follows that:
\begin{enumerate}[(i)]
    \item For any $\M \subseteq \MS$ and any $\Pi \subseteq \mathbbl{\Pi}$, we have that $\mathcal{M}^k(\Pi, \V) \subseteq \mathcal{M}^K(\Pi, \V)$. 
    \item If $\Pi$ is non-empty and \V\ contains at least one constant function, then there exist environments such that $\MS^k(\Pi, \V) \subset \MS^K(\Pi, \V)$.
\end{enumerate}
\end{restatable}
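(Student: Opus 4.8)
For part (i), the plan is to show that any model satisfying the order-$k$ constraint automatically satisfies the order-$K$ constraint whenever $k \mid K$. Write $K = jk$ for some positive integer $j$. Fix $\tilde m \in \mathcal{M}^k(\Pi,\V)$, a policy $\pi \in \Pi$, and a function $v \in \V$. The key observation is the closure hypothesis on $\V$: since $v \in \V$ and $\V$ is closed under $\Tpi$, we have $\Tpi v, \Tpi^2 v, \dots \in \V$, so the order-$k$ equality $\Tpit^k w = \Tpi^k w$ holds not just for $w = v$ but for every iterate $w = \Tpi^{ik} v$, $i = 0, 1, \dots, j-1$. The argument is then a telescoping induction: $\Tpit^{ik} v = \Tpi^{ik} v$ implies, by applying $\Tpit^k = \Tpi^k$ (valid because $\Tpi^{ik}v \in \V$) to both sides, that $\Tpit^{(i+1)k} v = \Tpit^k (\Tpit^{ik} v) = \Tpit^k(\Tpi^{ik}v) = \Tpi^k(\Tpi^{ik}v) = \Tpi^{(i+1)k}v$. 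Chaining from $i=0$ to $i=j$ gives $\Tpit^K v = \Tpi^K v$, i.e. $\tilde m \in \mathcal{M}^K(\Pi,\V)$.

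For part (ii), the task is to exhibit, for a given non-empty $\Pi$ and a $\V$ containing a constant function, a single environment and a model that lies in $\MS^K(\Pi,\V)$ but not in $\MS^k(\Pi,\V)$ (with $k < K$, $k \mid K$; the case $k = K$ being vacuous). The plan is to make the construction as small as possible — a one- or two-state MDP with $\gamma$ chosen conveniently and a single action, so that the only policy is forced and $\Pi$ is effectively a singleton. Let $c$ denote the constant function with value $c$ in $\V$. One then wants to pick the true reward $r$ and a model reward $\tilde r$ (and model transition $\tilde p$, perhaps just equal to $p$) so that the $K$-fold Bellman iterates applied to $c$ agree but the $k$-fold iterates disagree. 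Concretely, on a single self-looping state, $\Tpi^n c = r\,(1 + \gamma + \dots + \gamma^{n-1}) + \gamma^n c$, and similarly with $\tilde r$ for $\Tpit^n$; the difference $\Tpit^n c - \Tpi^n c = (\tilde r - r)\frac{1-\gamma^n}{1-\gamma}$ is nonzero for every $n \ge 1$ as soon as $\tilde r \ne r$, which would show the model is outside \emph{every} order-$n$ class, not what we want. So the single-self-loop example is too rigid, and the construction must use the extra freedom in $\tilde p$: choose $\tilde p$ so that the discrepancy introduced at each step of $\Tpit$ cancels out after exactly $k$ steps are compounded into $K = jk$ steps but does not cancel after only $k$. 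A clean way to realize this is a cycle of length related to $k$ (e.g. states arranged so that $\Tpit^k$ acts as the identity-like map on constants while $\Tpit$, $\Tpit^2,\dots,\Tpit^{k-1}$ do not), engineered via a periodic/cyclic transition structure whose period divides $K$ but not $k$; then the mismatch is invisible at multiples of the period and visible otherwise.

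The main obstacle I anticipate is part (ii): part (i) is a routine telescoping argument, but (ii) requires actually \emph{building} an environment–model pair with the precise divisibility-sensitive behavior, and the naive examples (single state, trivial dynamics) collapse all the order-$n$ classes together or keep them all distinct, neither of which gives the strict separation for the specific pair $(k,K)$. The work will be in choosing a transition structure — most likely a deterministic cycle whose length is tuned to $k$ and $K$ — and verifying by direct computation of $\Tpit^k c$ versus $\Tpi^k c$ and $\Tpit^K c$ versus $\Tpi^K c$ that the model sits strictly inside the larger class. It is worth checking whether the statement only needs a \emph{weak} inclusion $\subset$ witnessed for some environment (it does: "there exist environments such that"), which means we are free to also pick $\gamma$, $|\S|$, and $|\A|$ to make the arithmetic clean, and we may take $\Pi$ to be any single policy and $\V$ to be exactly $\{c\}$ augmented with its (finitely many, by closure) $\Tpi$-iterates.
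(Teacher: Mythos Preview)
Your plan for part~(i) is correct and is exactly the paper's argument: write $K=zk$, use closure of $\V$ under $\Tpi$ to apply the order-$k$ equality successively to the iterates $\Tpi^{ik}v\in\V$, and telescope.

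For part~(ii) the instinct to use cycles is right, and your diagnosis that a single self-loop is too rigid is correct, but from there the plan has real gaps. First, you write that you want ``$\Tpit^k$ to act as the identity-like map on constants,'' which would give \emph{agreement} at order~$k$---the opposite of what is needed. Second, and more importantly, you only engineer agreement of $\Tpit^K$ and $\Tpi^K$ on the single constant $c$, but $\V$ is closed under $\Tpi$ and in general contains the infinitely many non-constant iterates $\Tpi c,\Tpi^2 c,\dots$ (your claim that this orbit is finite is unjustified); membership in $\MS^K(\Pi,\V)$ requires $\Tpit^K v=\Tpi^K v$ for \emph{all} of these. The paper's construction resolves this by making the model agree with the environment at order~$K$ on every function in~$\FS$, not just on constants. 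The environment is a deterministic $K$-cycle (one action, states $s_1,\dots,s_K$, reward $r(s_i)=\mathbf{1}\{1\le i\le k\}$); the model keeps the same states but makes every state self-loop with reward $\tilde r(s_i)=r^K(s_i)\big/\sum_{t=0}^{K-1}\gamma^t$, where $r^K(s_i)$ is the environment's discounted $K$-step return from~$s_i$. Because \emph{both} dynamics are back at the starting state after exactly $K$ steps and the accumulated rewards have been matched by design, one gets $\Tpit^K v(s)=r^K(s)+\gamma^K v(s)=\Tpi^K v(s)$ for every $v\in\FS$, hence $\tilde m\in\MS^K(\Pi,\V)$ whatever $\V$ is. For strictness, evaluate at $s_1$ on the constant~$c$: the environment gives $r^k(s_1)+\gamma^k c=r^K(s_1)+\gamma^k c$ (all reward is collected in the first $k$ steps by the choice of~$r$), while the model gives $r^K(s_1)\cdot\tfrac{1-\gamma^k}{1-\gamma^K}+\gamma^k c$, and these differ for $\gamma\in(0,1)$. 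Note that the model's period is~$1$, which divides both $k$ and~$K$, so your ``period divides $K$ but not $k$'' heuristic is not what drives the example.

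One further correction: $\Pi$ and $\V$ are hypotheses, not things you get to choose; you cannot ``take $\Pi$ to be any single policy and $\V$ to be exactly $\{c\}$.'' The paper handles this by building a one-action environment (so any non-empty $\Pi$ is forced to be the unique singleton) and by establishing $\tilde m\in\MS^K(\PS,\FS)\subseteq\MS^K(\Pi,\V)$.
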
 
We defer all proofs of theoretical results to Appendix~\ref{sec:proofs}.
Based on Proposition~\ref{property:limiting_model_classes} we can relate different VE model classes according to the greatest common divisor of their respective orders; specifically, two classes $\M^k(\Pi, \V)$ and $\M^K(\Pi, \V)$ will intersect at $\M^{\mathrm{gcd}(k,K)}(\Pi, \V)$ (Figure~\ref{fig:set_inclusion}).
\begin{wrapfigure}{R}{0.4\textwidth}
\vspace{-20pt}
\centering 
\includegraphics[width=0.38\textwidth]{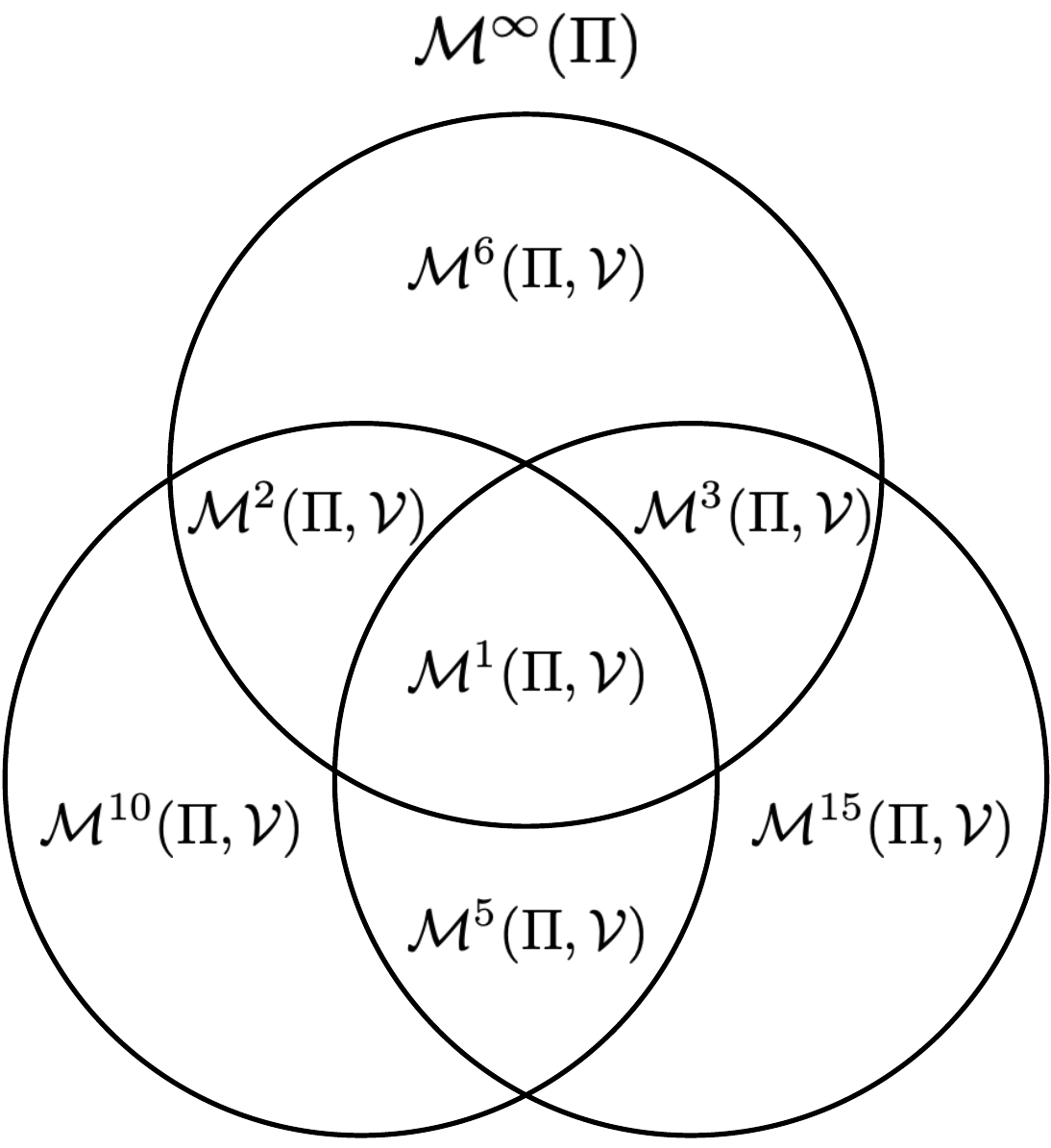}
\caption{Topology of the space of order-$k$ VE classes. Given a set of policies $\Pi$, a set of functions $\V$ closed under Bellman updates, and $k, K \in \mathbb{Z}^+$ such that $k$ divides $K$, we have that $\M^k(\Pi, \V)~\subseteq~\M^K(\Pi, \V)$.}
\label{fig:set_inclusion}
\vspace{-30pt}
\end{wrapfigure}
Proposition~\ref{property:limiting_model_classes} also implies that, in contrast to order-one VE classes, higher order VE classes potentially include multiple models, even if VE is defined with respect to all policies $\mathbbl{\Pi}$ and all functions $\mathbbl{V}$. In addition, the size of a VE class cannot decrease as we increase its order from $k$ to a multiple of $k$ (and in some cases it will strictly increase). This invites the question of what happens in the limit as we keep increasing the VE order. To answer this question, we introduce a crucial concept for this paper:
\begin{definition}
(Proper value equivalence). Given a set of policies $\Pi \subseteq \mathbbl{\Pi}$, let 
\begin{equation}
\mathcal{M}^\infty(\Pi) = \lim_{k \to \infty} \mathcal{M}^k(\Pi, \mathbbl{V}) = \{ \tilde{m} \in \mathcal{M} : \tilde{v}_\pi = v_\pi \ \forall \pi \in \Pi \},
\label{eq:pve-def}
\end{equation}
where $\tilde{v}_\pi$ and $v_\pi$ are the value functions of $\pi$ induced by model $\tilde{m}$ and the environment. We say that each $\tilde{m} \in \mathcal{M}^\infty(\Pi)$ is a \textbf{proper value equivalent} model to the environment with respect to $\Pi$. 
\end{definition}
Because the process of repeatedly applying a policy's Bellman operator to a function converges to the same fixed point regardless of the function, in an order-$\infty$ VE class the set $\Pi$ uniquely determines the set \V. This reduces the problem of defining $\Pi$ and $\V$ to defining the former only. Also, since all functions in an order-$\infty$ VE are \emph{value} functions, we call it \emph{proper VE} or PVE.

It is easy to show that Proposition~\ref{property:limiting_model_classes} is valid for any $k \in \mathbb{Z}^+$ when $K = \infty$ (Corollary~\ref{coro:FPVE_vs_VE_limiting} in Appendix~\ref{sec:proofs}).
Thus, in some sense, $\M^{\infty}$ is the ``biggest'' VE class. It is also possible to define this special VE class in terms of any other:
\begin{restatable}{proposition}{FPVEDecomp}
\label{prop:fpve_decomp}
For any $\Pi \subseteq \mathbbl{\Pi}$ and any $k \in \mathbb{Z}^+$ it follows that
\begin{equation}
\mathcal{M}^\infty(\Pi) = \bigcap_{\pi \in \Pi} \M^k(\{ \pi \}, \{ v_\pi \}),
\label{eq:fpve_decomp}
\end{equation}
\end{restatable}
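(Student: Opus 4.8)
The plan is to prove the two set inclusions by a pointwise argument: fix an arbitrary model $\tilde{m} \in \M$ and a single policy $\pi \in \Pi$, and show that $\tilde{m} \in \M^k(\{\pi\},\{v_\pi\})$ if and only if $\tilde{v}_\pi = v_\pi$. Intersecting this equivalence over all $\pi \in \Pi$ immediately gives \eqref{eq:fpve_decomp}, since the left-hand side of the proposition is by definition $\{\tilde{m} : \tilde{v}_\pi = v_\pi\ \forall \pi \in \Pi\}$ and the right-hand side is $\bigcap_\pi \{\tilde{m} : \tilde{\mathcal{T}}_\pi^k v_\pi = \mathcal{T}_\pi^k v_\pi\}$.

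The first reduction I would make is to simplify the membership condition on the right. Since $v_\pi$ is the fixed point of the environment operator $\mathcal{T}_\pi$, we have $\mathcal{T}_\pi^k v_\pi = v_\pi$ for every $k \in \mathbb{Z}^+$. Hence $\tilde{m} \in \M^k(\{\pi\},\{v_\pi\})$ is equivalent to the single equation $\tilde{\mathcal{T}}_\pi^k v_\pi = v_\pi$, i.e.\ to $v_\pi$ being a fixed point of $\tilde{\mathcal{T}}_\pi^k$. So the whole proposition comes down to: $v_\pi$ is a fixed point of $\tilde{\mathcal{T}}_\pi^k$ iff $\tilde{v}_\pi = v_\pi$.

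For the forward-to-backward easy direction, suppose $\tilde{v}_\pi = v_\pi$. By definition $\tilde{v}_\pi$ is the fixed point of $\tilde{\mathcal{T}}_\pi$, so $\tilde{\mathcal{T}}_\pi v_\pi = \tilde{\mathcal{T}}_\pi \tilde{v}_\pi = \tilde{v}_\pi = v_\pi$; iterating $k$ times gives $\tilde{\mathcal{T}}_\pi^k v_\pi = v_\pi$. For the converse, suppose $\tilde{\mathcal{T}}_\pi^k v_\pi = v_\pi$. Here I would invoke that $\tilde{\mathcal{T}}_\pi$ is a $\gamma$-contraction in the sup-norm (which holds for the Bellman operator of any model $\tilde{m} = (\tilde{r},\tilde{p})$ since $\gamma < 1$), so $\tilde{\mathcal{T}}_\pi^k$ is a $\gamma^k$-contraction and, by the Banach fixed-point theorem, has a \emph{unique} fixed point. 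But $\tilde{v}_\pi$ is also a fixed point of $\tilde{\mathcal{T}}_\pi^k$ (being a fixed point of $\tilde{\mathcal{T}}_\pi$), so uniqueness forces $v_\pi = \tilde{v}_\pi$.

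The only subtle point, and the one I would state carefully, is this uniqueness-of-fixed-point argument in the converse direction: one must note that the existence of some function fixed by $\tilde{\mathcal{T}}_\pi^k$ is not automatic a priori, but is guaranteed precisely because $\tilde{\mathcal{T}}_\pi^k$ is a contraction, and the same contraction property forces that fixed point to coincide with $\tilde{v}_\pi$. Everything else is bookkeeping: the reduction $\mathcal{T}_\pi^k v_\pi = v_\pi$, the characterization of model membership, and distributing the equivalence across the intersection over $\pi \in \Pi$. Note also that the argument is uniform in $k$, which is exactly what the statement claims (``for any $k \in \mathbb{Z}^+$'').
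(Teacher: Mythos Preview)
Your proposal is correct and follows essentially the same approach as the paper: reduce to a single policy, use that $\mathcal{T}_\pi^k v_\pi = v_\pi$, and argue both inclusions via the contraction property of $\tilde{\mathcal{T}}_\pi$. The only cosmetic difference is in the converse direction: the paper iterates the relation $\tilde{\mathcal{T}}_\pi^k v_\pi = v_\pi$ to obtain $\tilde{\mathcal{T}}_\pi^{nk} v_\pi = v_\pi$ for all $n$ and then passes to the limit, whereas you invoke the Banach fixed-point theorem directly to get uniqueness of the fixed point of $\tilde{\mathcal{T}}_\pi^k$; these are the same argument phrased two ways.
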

where $v_\pi$ is the value of policy $\pi$ in the environment.

We thus have two equivalent ways to describe the class of models which are PVE with respect to a set of policies $\Pi$.
The first, given in (\ref{eq:pve-def}), is the order-$\infty$ limit of value equivalence with respect to $\Pi$ and the set of all functions $\mathbbl{V}$.
The second, given in (\ref{eq:fpve_decomp}), is the intersection of the classes of models that are order-$k$ VE with respect to the singleton policies in $\Pi$ and their respective value functions.
This latter form is valid for \emph{any} $k$, and will underpin our practical algorithmic instantiations of PVE.

Setting $k=1$ in Proposition~\ref{prop:fpve_decomp} we see that PVE can be written in terms of order-one VE. This means that $\M^{\infty}$ inherits many of the topological properties of $\M^{1}$ shown by \citet{grimm2020value}. Specifically, we know that $\mathcal{M}'^\infty(\Pi) \subseteq \mathcal{M}^\infty(\Pi)$ if $\mathcal{M}' \subseteq \mathcal{M}$ and also that $\mathcal{M}^\infty(\Pi') \subseteq \mathcal{M}^\infty(\Pi)$ when $\Pi \subseteq \Pi'$ (these directly follow from \ctp{grimm2020value} Properties 1 and 3 respectively).

Proposition~\ref{prop:fpve_decomp} also sheds further light into the relation between PVE and order-$k$ VE more generally. Let $\Pi$ be a set of policies and $\V_{\pi}$ their value functions. Then, for any $k \in \mathbb{Z}^+$, we have that
\begin{equation}
\label{eq:pve_ve}
\begin{aligned}
\M^k(\Pi,\V_{\pi}) &= \bigcap_{\pi \in \Pi}  \bigcap_{v \in \V_\pi} \M^k(\{ \pi \}, \{ v \}) \subseteq \bigcap_{\pi \in \Pi} \M^k(\{ \pi \}, \{ v_\pi \}) = \M^\infty(\Pi),
\end{aligned}
\end{equation}
which is another way to say that $\M^{\infty}$ is, in some sense, the largest among all the VE classes. The reason why the size of VE classes is important is that it directly reflects the main motivation behind the VE principle. VE's premise is that models should be constructed taking into account their eventual use: if some aspects of the environment are irrelevant for value-based planning, it should not matter whether a model captures them or not. This means that all models that only differ with respect to these irrelevant aspects but are otherwise correct qualify as valid VE solutions. A larger VE class generally means that more irrelevant aspects of the environment are being ignored by the agent. We now make this intuition more concrete by showing how irrelevant aspects of the environment that are eventually captured by order-one VE are always ignored by PVE:

\begin{restatable}{proposition}{irrelevantState}
\label{prop:FPVE_strictly_bigger}
Let $\Pi \subseteq \PS$. If the environment state can be factored as $\mathcal{S} = \mathcal{X} \times \mathcal{Y}$ where $|\mathcal{Y}| > 1$ and $v_\pi(s) = v_\pi((x, y)) = v_\pi(x)$ for all $\pi \in \Pi$, then $\MS^1(\Pi, \mathbbl{V}) \subset \MS^\infty(\Pi)$.
\end{restatable}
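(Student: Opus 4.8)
The plan is to prove the two directions of the strict inclusion separately, the inclusion being essentially immediate and the strictness requiring an explicit witness model. For $\MS^1(\Pi,\mathbbl{V}) \subseteq \MS^\infty(\Pi)$, set $k=1$ in \eqref{eq:pve_ve} to get $\M^1(\Pi,\V_\pi)\subseteq\M^\infty(\Pi)$; since $\V_\pi\subseteq\mathbbl{V}$ and demanding agreement on more functions only shrinks an order-one class, $\MS^1(\Pi,\mathbbl{V})\subseteq\MS^1(\Pi,\V_\pi)\subseteq\MS^\infty(\Pi)$. Everything then reduces to exhibiting one model in $\MS^\infty(\Pi)\setminus\MS^1(\Pi,\mathbbl{V})$.

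The idea is to ``collapse'' the irrelevant coordinate. Assume $\Pi\neq\emptyset$ (otherwise both classes equal $\MS$); fix $\pi_0\in\Pi$ and a state $s_0\in\mathcal{S}$, and write $r_\pi,p_\pi$ for the policy-averaged reward and kernel. The distribution $p_{\pi_0}(\cdot\mid s_0)$ has nonempty support, so fix $(x',y')$ in it and, using $|\mathcal{Y}|>1$, pick $\bar y\in\mathcal{Y}\setminus\{y'\}$. Define $\tilde m\defi(\tilde r,\tilde p)$ with $\tilde r\defi r$, and with $\tilde p$ obtained from $p$ by moving all next-state $\mathcal{Y}$-mass onto $\bar y$: $\tilde p((x',y')\mid s,a)\defi\sum_{y''}p((x',y'')\mid s,a)$ when $y'=\bar y$, and $0$ otherwise. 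A one-line check shows $\tilde p(\cdot\mid s,a)$ is a probability distribution with the same $\mathcal{X}$-marginal as $p(\cdot\mid s,a)$, so $\tilde m\in\MS$.

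Next I would check $\tilde m\in\MS^\infty(\Pi)$ by showing that each $v_\pi$, $\pi\in\Pi$, is a fixed point of $\tilde{\mathcal{T}}_\pi$: because $v_\pi((x,y))=v_\pi(x)$ ignores the second coordinate and $\tilde p$ preserves $\mathcal{X}$-marginals, the term $\sum_{(x',y')}\tilde p((x',y')\mid s,a)\,v_\pi((x',y'))$ is unchanged from the one taken under $p$, so $\tilde{\mathcal{T}}_\pi v_\pi=\mathcal{T}_\pi v_\pi=v_\pi$; since $\tilde{\mathcal{T}}_\pi$ is a $\gamma$-contraction with unique fixed point $\tilde v_\pi$, this forces $\tilde v_\pi=v_\pi$. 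For $\tilde m\notin\MS^1(\Pi,\mathbbl{V})$, take $v\in\mathbbl{V}$ to be the indicator of $\{(x,y):y=y'\}$: then $\tilde{\mathcal{T}}_{\pi_0}v(s_0)=r_{\pi_0}(s_0)$, since $\tilde p$ puts no mass on states with $\mathcal{Y}$-coordinate $y'\neq\bar y$, while $\mathcal{T}_{\pi_0}v(s_0)\ge r_{\pi_0}(s_0)+\gamma\,p_{\pi_0}((x',y')\mid s_0)>r_{\pi_0}(s_0)$ because $\gamma>0$ and $(x',y')$ lies in the support of $p_{\pi_0}(\cdot\mid s_0)$. Hence $\tilde{\mathcal{T}}_{\pi_0}v\neq\mathcal{T}_{\pi_0}v$, and together with the inclusion this gives $\MS^1(\Pi,\mathbbl{V})\subset\MS^\infty(\Pi)$.

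The one genuinely delicate step is the fixed-point verification: $\tilde m$ must be engineered so that the collapse leaves value functions depending only on $x$ untouched, and it is exactly the preservation of the $\mathcal{X}$-marginal of the kernel that ensures this --- this is where the hypothesis on $v_\pi$ is used. The remaining pieces (validity of $\tilde p$, nonemptiness of supports, the contraction argument) are routine. It is worth flagging the degenerate regimes $\Pi=\emptyset$ and $\gamma=0$, in which $\MS^1(\Pi,\mathbbl{V})$ and $\MS^\infty(\Pi)$ in fact coincide, so the statement is implicitly about nonempty $\Pi$ and $\gamma\in(0,1)$.
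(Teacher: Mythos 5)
Your proposal is correct and follows essentially the same route as the paper: the paper's witness $m_{y_0}$ is exactly your ``collapse the $\mathcal{Y}$-coordinate onto a fixed value while preserving the $\mathcal{X}$-marginal and the reward'' model, membership in $\MS^\infty(\Pi)$ is verified by the same fixed-point argument, and exclusion from $\MS^1(\Pi,\mathbbl{V})$ is shown with an indicator test function in both cases (the paper phrases it as: if every $m_{y}$, $y\in\mathcal{Y}$, were order-one VE then the environment would have to transition to every $y$ with probability one, which is impossible for $|\mathcal{Y}|>1$). Your explicit flagging of the degenerate cases $\gamma=0$ and $\Pi=\emptyset$ is a fair observation that the paper leaves implicit.
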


Note that the subset relation appearing in Proposition~\ref{prop:FPVE_strictly_bigger} is strict.
We can think of the variable `$y$' appearing in Proposition~\ref{prop:FPVE_strictly_bigger} as superfluous features that do not influence the RL task, like the background of an image or any other sensory data that is irrelevant to the agent's goal. A model is free to assign arbitrary dynamics to such irrelevant aspects of the state without affecting planning performance. Since order-one VE eventually pins down a model that describes everything about the environment, one would expect the size of $\M^{\infty}$ relative to $\M^1$ to increase as more superfluous features are added. Indeed, in our proof of Proposition~\ref{prop:FPVE_strictly_bigger} we construct a set of models in $\mathcal{M}^\infty(\PS)$ which are in one-to-one correspondence with $\mathcal{Y}$, confirming this intuition (see Appendix~\ref{sec:proofs}).

\subsection*{Proper value equivalence yields models that are sufficient for optimal planning}

In general PVE does not collapse to a single model even in the limit of $\Pi = \PS$. At first this may cause the impression that one is left with the extra burden of selecting one among the PVE models. However, it can be shown that no such choice needs to be made: 
\begin{restatable}{proposition}{FPVEOptimal}
\label{prop:fpve_limiting_model_class_optimal2}
An optimal policy for any $\tilde{m} \in \M^\infty(\PS)$ is also an optimal policy in the environment. 
\end{restatable}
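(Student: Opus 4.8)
The plan is to observe that a model $\tilde m \in \M^\infty(\PS)$ reproduces the value function of \emph{every} policy, hence in particular reproduces the optimal value function, and that any policy attaining a model's optimal value is forced to attain the environment's optimal value as well. Concretely, first I would fix an arbitrary $\tilde m \in \M^\infty(\PS)$ and write $v^\star$ and $\tilde v^\star$ for the optimal value functions of the environment and of $\tilde m$, i.e. $v^\star(s) = \sup_{\pi \in \PS} v_\pi(s)$ and $\tilde v^\star(s) = \sup_{\pi \in \PS} \tilde v_\pi(s)$, where $\tilde v_\pi$ is the fixed point of $\tilde\T_\pi$. The defining property of $\M^\infty(\PS)$ in~\eqref{eq:pve-def} gives $\tilde v_\pi = v_\pi$ for every $\pi \in \PS$, so taking the pointwise supremum over $\pi \in \PS$ on both sides yields $\tilde v^\star = v^\star$.

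The second step finishes the argument. I would invoke the standard fact (valid, e.g., for finite MDPs with $\gamma < 1$, or in any setting where a stationary optimal policy exists) that in any MDP a policy is optimal if and only if its value function equals that MDP's optimal value function, and that this optimal value function is attained. Let $\pi^\circ$ be any policy that is optimal for $\tilde m$, so $\tilde v_{\pi^\circ} = \tilde v^\star$. Since $\pi^\circ \in \PS$, applying the PVE equality once more gives $v_{\pi^\circ} = \tilde v_{\pi^\circ} = \tilde v^\star = v^\star$. Hence $\pi^\circ$ attains the environment's optimal value at every state, i.e. $\pi^\circ$ is an optimal policy in the environment, which is exactly the claim. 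This also makes transparent the role of taking $\Pi = \PS$: both the equality $\tilde v^\star = v^\star$ and the final application of the PVE property rely on the set of policies over which equivalence is asserted being rich enough to contain the model's optimal policy and to pin down the optimal value function.

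I do not expect a genuine obstacle here: once the statement is phrased in terms of value functions (rather than Bellman operators), it is essentially a two-line computation resting on Proposition/Definition~\eqref{eq:pve-def}. The only points requiring a little care are (i) that the pointwise supremum over all of $\PS$ may legitimately be passed through the equality $\tilde v_\pi = v_\pi$, and (ii) the precise equivalence between ``$\pi^\circ$ is an optimal policy'' and ``$v_{\pi^\circ}$ equals the optimal value function'', which is where any regularity assumptions on $\M$ and on the state and action spaces enter. If one insisted on MDPs in which no optimal policy exists the statement would be vacuous, so assuming attainment costs nothing.
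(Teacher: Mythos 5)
Your argument is correct and is essentially the paper's: both proofs rest on applying the PVE identity $\tilde v_\pi = v_\pi$ to the model's optimal policy and to the environment's optimal policy (the paper phrases this as a contradiction between a deterministic environment-optimal $\pi^*$ and a model-optimal $\tilde\pi^*$, while you phrase it directly by passing the supremum over $\PS$ through the equality to get $\tilde v^\star = v^\star$). Your direct version is, if anything, slightly cleaner for the all-policies case since it avoids needing to single out a particular optimal policy of the environment.
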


According to Proposition~\ref{prop:fpve_limiting_model_class_optimal} any model $\tilde{m} \in \M^\infty(\PS)$ used for planning will yield an optimal policy for the environment. In fact, in the spirit of ignoring as many aspects of the environment as possible, we can define an even larger PVE class by focusing on deterministic policies only:

\begin{restatable}{corollary}{FPVEDetOptimal}
\label{prop:fpve_limiting_model_class_optimal}
Let $\mathbbl{\Pi}_\text{det}$ be the set of all deterministic policies. An optimal policy for any $\tilde{m} \in \mathcal{M}^\infty(\mathbbl{\Pi}_\text{det})$ is also optimal in the environment. 
\end{restatable}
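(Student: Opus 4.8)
Because $\PSdet \subseteq \PS$, the shrinkage property of PVE classes gives $\M^\infty(\PS) \subseteq \M^\infty(\PSdet)$, so this corollary is genuinely stronger than Proposition~\ref{prop:fpve_limiting_model_class_optimal2}: a model $\tilde{m} \in \M^\infty(\PSdet)$ is only required to reproduce the environment's value functions for \emph{deterministic} policies, so we cannot simply specialise the earlier result. The plan is to combine the defining equalities $\tilde{v}_\pi = v_\pi$ for all $\pi \in \PSdet$ with two standard facts about discounted MDPs: every such MDP admits a deterministic optimal policy, and its optimal value function equals $\max_{\pi \in \PSdet} v_\pi$.

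Step one is to show the model and the environment have the same optimal value function. Writing $v^*$ and $\tilde{v}^*$ for the optimal value functions of the environment and of $\tilde{m}$, applying $v^* = \max_{\pi \in \PSdet} v_\pi$ in each MDP and using $\tilde{v}_\pi = v_\pi$ on $\PSdet$ gives $\tilde{v}^* = \max_{\pi \in \PSdet}\tilde{v}_\pi = \max_{\pi \in \PSdet} v_\pi = v^*$.

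Step two handles deterministic optimal policies: let $\tilde{\pi}^*$ be a deterministic optimal policy of $\tilde{m}$ (it exists). Since $\tilde{\pi}^* \in \PSdet$ we get $v_{\tilde{\pi}^*} = \tilde{v}_{\tilde{\pi}^*} = \tilde{v}^* = v^*$, so $\tilde{\pi}^*$ is optimal in the environment. Step three promotes this to arbitrary, possibly stochastic, optimal policies of $\tilde{m}$ by arguing at the level of actions. Fix a state $s$ and an action $a$ that is greedy in $\tilde{m}$ at $s$; the deterministic policy that plays $a$ at $s$ and acts $\tilde{m}$-greedily elsewhere is optimal in $\tilde{m}$, hence optimal in the environment by step two, hence --- via the Bellman optimality equation, using $\tilde{v}^* = v^*$ --- greedy in the environment at every state, in particular $a$ is environment-greedy at $s$. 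Therefore every action in the support of any optimal policy of $\tilde{m}$ is environment-greedy, so such a policy satisfies the environment's Bellman optimality equation and is optimal in the environment.

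The delicate point --- and the main obstacle --- is step three: because $\M^\infty(\PSdet)$ can be \emph{strictly} larger than $\M^\infty(\PS)$ (the value functions need not agree on stochastic policies), the one-line value comparison used for Proposition~\ref{prop:fpve_limiting_model_class_optimal2} is unavailable, and we must route the argument through the set of greedy actions, where the deterministic agreement is enough. Everything else reduces to classical MDP theory.
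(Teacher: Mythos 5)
Your proof is correct, and its core coincides with the paper's: both arguments rest on the two classical facts that a discounted MDP admits a deterministic optimal policy and that membership in $\M^\infty(\PSdet)$ preserves the value functions of exactly the deterministic policies. The paper runs this as a short contradiction: taking $\pi^*$ and $\tilde{\pi}^*$ to be deterministic optimal policies of the environment and of $\tilde{m}$ respectively, if $\tilde{\pi}^*$ were suboptimal in the environment then $\tilde{v}_{\pi^*}(s) = v_{\pi^*}(s) > v_{\tilde{\pi}^*}(s) = \tilde{v}_{\tilde{\pi}^*}(s)$ at some state, contradicting optimality of $\tilde{\pi}^*$ in the model --- essentially your steps one and two phrased contrapositively. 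Where you go beyond the paper is step three: the paper's proof only certifies \emph{deterministic} optimal policies of $\tilde{m}$, whereas the statement quantifies over all optimal policies, and your greedy-action argument (every action supported by an optimal policy of $\tilde{m}$ is $\tilde{m}$-greedy, hence embeds in a deterministic optimal policy of $\tilde{m}$, hence is environment-greedy by step two together with $\tilde{v}^* = v^*$, so the original policy satisfies the environment's Bellman optimality equation) closes that gap. This is a genuine refinement rather than a cosmetic one, because a stochastic optimal policy of $\tilde{m}$ need not have matching values in the two MDPs, so the direct value comparison used for deterministic policies is unavailable --- exactly the obstacle you identify. In the paper's intended use the distinction is minor, since planning typically returns a deterministic policy, but your version is the one that literally proves the claim as stated.
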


Given that both $\M^{\infty}(\PS)$ and $\M^{\infty}(\PS_{\text{det}})$ are sufficient for optimal planning, one may wonder if these classes are in fact the same. The following result states that the class of PVE models with respect to deterministic policies can be strictly larger than its counterpart defined with respect to all policies:

\begin{restatable}{proposition}{FPVEDetBigger}
\label{prop:fpve_det_bigger}
There exist environments and model classes for which $\mathcal{M}^\infty(\mathbbl{\Pi}) \subset \mathcal{M}^\infty(\PSdet)$.
\end{restatable}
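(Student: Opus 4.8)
The plan is to first dispose of the easy direction and then build a concrete example. The inclusion $\M^\infty(\PS) \subseteq \M^\infty(\PSdet)$ holds for every environment and model class, since $\PSdet \subseteq \PS$ and, by the monotonicity property inherited from \ctp{grimm2020value} (namely $\M^\infty(\Pi') \subseteq \M^\infty(\Pi)$ whenever $\Pi \subseteq \Pi'$, noted above), enlarging the set of policies can only shrink the PVE class. So the entire content of the proposition is to exhibit one environment, one model class, and one model that is PVE with respect to \emph{every} deterministic policy but fails to be PVE with respect to \emph{some} stochastic policy.

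Second, I would use a minimal example. Take $\S = \{1,2\}$ with, at state $1$, two actions: $a$ with reward $0$ and deterministic transition $1 \to 2$, and $b$ with reward $\rho > 0$ and deterministic transition $1 \to 1$; at state $2$ a single action with reward $0$ and deterministic transition $2 \to 1$; fix any $\gamma \in (0,1)$, and let the model class be all models $\MS$ on this $\S,\A$. There are exactly two deterministic policies, $\pi_a$ and $\pi_b$ (the choice at state $1$), and solving the $2{\times}2$ Bellman fixed point gives $v_{\pi_a} = (0,0)$ and $v_{\pi_b} = \big(\tfrac{\rho}{1-\gamma},\, \tfrac{\gamma\rho}{1-\gamma}\big)$.

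Third, I would define the candidate model $\tilde{m} = (\tilde{r},\tilde{p})$ to agree with the environment everywhere except on action $b$ at state $1$, where I set $\tilde{p}(2 \mid 1,b) = 1$ (re-routing the self-loop through state $2$) and $\tilde{r}(1,b) = \rho(1+\gamma)$. A short fixed-point computation shows $\tilde{v}_{\pi_a} = v_{\pi_a}$ and $\tilde{v}_{\pi_b} = v_{\pi_b}$: the inflated reward on $b$ exactly compensates for the extra discounted step through state $2$, so $\tilde{m} \in \MS^\infty(\PSdet)$. The key feature is that in $\tilde{m}$ \emph{both} actions at state $1$ now lead to state $2$, so under any stochastic policy the induced chain collapses to the cycle $1 \to 2 \to 1$; the self-loop that makes the stochastic value functions in the true environment genuinely new (not convex combinations of the deterministic ones) has been erased. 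Concretely, for the policy $\pi_\alpha$ that plays $a$ with probability $\alpha \in (0,1)$ at state $1$, one computes $v_{\pi_\alpha}(1) = \tfrac{(1-\alpha)\rho}{(1-\gamma)(1+\alpha\gamma)}$ in the environment but $\tilde{v}_{\pi_\alpha}(1) = \tfrac{(1-\alpha)\rho}{1-\gamma}$ in the model, and these differ for every $\alpha \in (0,1)$ (using $\rho \neq 0$, $\gamma \neq 0$). Hence $\tilde{m} \notin \MS^\infty(\PS)$, and combined with the automatic inclusion we get $\MS^\infty(\PS) \subset \MS^\infty(\PSdet)$ for this environment.

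The only real work is the two fixed-point computations in the third step, and in particular pinning down $\tilde{r}(1,b)$ so that $\tilde{v}_{\pi_b}$ remains exact; everything else is bookkeeping and a check that $\tilde{m}$ is a valid model. The one conceptual point to get right — and what makes the example tick — is choosing a perturbation of the transition kernel that is invisible to each deterministic policy taken individually (each such policy only ever exercises one action per state) yet alters the transition structure actually experienced by mixtures of actions.
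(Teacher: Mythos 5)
Your proposal is correct and takes essentially the same approach as the paper: both exhibit a concrete environment/model pair whose dynamics are perturbed at a single state--action pair in a way that is invisible to every deterministic policy (each of which only ever exercises one action per state) but changes the chain experienced by a stochastic mixture. Your two-state example with the compensating reward $\tilde{r}(1,b)=\rho(1+\gamma)$ is a different (and somewhat more explicitly verified) instantiation than the paper's three-state example of Figure~\ref{fig:fpve_example}, but the underlying mechanism and the verification of both inclusions are the same.
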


\begin{figure}
\centering 
\includegraphics[width=0.85\textwidth]{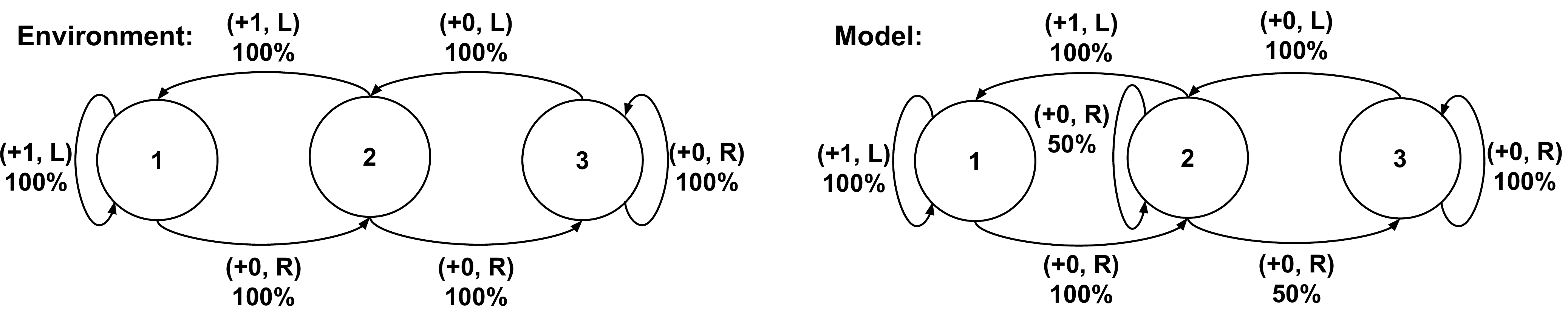}
\caption{An environment / model pair with the same values for all deterministic policies but not all stochastic policies. The environment has three states and two actions: $\A = \{\mathrm{L}, \mathrm{R}\}$. The percentages in the figure indicate the probability of a given transition and the corresponding tuples $(r, a)$ indicate the reward associated with a given action. A deterministic policy cannot dither between $s_1$ and $s_3$ but a stochastic policy can. Note that the dynamics between the pair differs when taking action R from $s_2$. This difference will affect the dithering behavior of such a stochastic policy in a way that results in different model and environment values.}
% \vspace{-15pt}
\label{fig:fpve_example}
\end{figure}
Figure~\ref{fig:fpve_example} illustrates Proposition~\ref{prop:fpve_det_bigger} with an example of environment and a model $\tilde{m}$ such that $\tilde{m} \in \mathcal{M}^\infty(\mathbbl{\Pi}_\text{det})$ but 
$\tilde{m} \notin \mathcal{M}^\infty(\mathbbl{\Pi})$. 

To conclude our discussion on models that are sufficient for optimal planning, we argue that, in the absence of additional information about the environment or the agent, $\mathcal{M}^\infty(\mathbbl{\Pi}_\text{det})$ is in fact the largest possible VE class that is guaranteed to yield optimal performance.
To see why this is so, suppose we remove a single deterministic policy from $\PSdet$ and pick an arbitrary model $\tilde{m} \in \M^\infty(\PSdet - \{\pi\})$. Let $\tilde{v}_\pi$ be the value function of $\pi$ according to the model $\tilde{m}$. Because $\pi$ is not included in the set of policies used to enforce PVE, $\tilde{v}_\pi$ may not coincide with $v_\pi$, the actual value function of $\pi$ according to the environment. Now, if $\pi$ happens to be the only optimal policy in the environment and $\tilde{v}_\pi$ is not the optimal value function of $\tilde{m}$, the policy returned by this model will clearly be sub-optimal.

\section{Learning a proper value-equivalent model}
\label{sec:learning_pve}

%In the previous section we showed that, in the absence of privileged information, $\M^{\infty}(\PSdet)$ is the largest VE class that allows for optimal planning. 
Having established that we want to find a model in $\M^\infty(\PSdet)$, we now turn our attention to how this can be done in practice. Following \ct{grimm2020value}, given a finite set of policies $\Pi$ and a finite set of functions \V, we cast the search for a model $\mt \in \M^k(\Pi, \V)$ as the minimization of deviations from~(\ref{eq:ve_sequence}):
\begin{equation}
\label{eq:k_ve_loss}
\loss_{\Pi, \V}^k(m^*,\mt) \defi \sum_{\pi \in \Pi} \sum_{v \in \V} \| \Tpi^k v - \Tpit^k v||,
\end{equation}
where $\Tpit $ are Bellman operators induced by \mt\ and $\lVert\cdot\rVert$ is a norm.\footnote{We can also impose VE with infinite sets of functions and policies by replacing the respective sums with integrals; in this case one may consider taking a supremum over VE terms to avoid situations where VE is not necessarily satisfied on measure 0 sets.} Note that setting $k = \infty$ in~(\ref{eq:k_ve_loss}) yields a loss that requires computing $\mt$'s value function---which is impractical to do if $\mt$ is being repeatedly updated. Thankfully, by leveraging the connection between order-$k$ VE and PVE given in Proposition~\ref{prop:fpve_decomp}, we can derive a practical PVE loss:
\begin{equation}
\label{eq:pve_loss}
\loss_{\Pi, \infty}^{k}(m^*,\mt) \defi 
\sum_{\pi \in \Pi} \| \Tpi^k v_\pi - \Tpit^k v_\pi||
= \sum_{\pi \in \Pi} \| v_\pi - \Tpit^k v_\pi||.
\end{equation}
Interestingly, given a set of policies $\Pi$, minimizing~(\ref{eq:pve_loss}) for any $k$ will result in a model $\mt \in \M^\infty(\Pi)$ ({\sl cf.} Proposition~\ref{prop:fpve_decomp}). As we will discuss shortly, this property can be exploited to generate multiple loss functions that provide a richer learning signal in practical scenarios.

Contrasting loss functions~(\ref{eq:k_ve_loss}) and~(\ref{eq:pve_loss}) we observe an important fact: unlike with other order-$k$ VE classes, PVE requires actual value functions to be enforced in practice. Since value functions require data and compute to be obtained, it is reasonable to ask whether the benefits of PVE justify the associated additional burden. Concretely, one may ask whether the sample transitions and computational effort spent in computing the value functions to be used with PVE would not be better invested in enforcing other forms of VE over arbitrary functions that can be readily obtained.

We argue that in many cases one does not have to choose between order-$k$ VE and PVE. Value-based RL algorithms usually compute value functions iteratively, generating a sequence of functions $v_1, v_2, ...$ which will eventually converge to $\vt_\pi$ for some $\pi$. A model-based algorithm that computes $\vt_\pi$ in this way has to somehow interleave this process with the refinement of the model $\mt$. When it comes to VE, one extreme solution is to only use the final approximation $\vt_\pi \approx v_\pi$ in an attempt to enforce PVE through~(\ref{eq:pve_loss}). It turns out that, as long as the sequence $v_1, v_2, ...$ is approaching $v_\pi$, one can use \emph{all} the functions in the sequence to enforce PVE with respect to $\pi$. Our argument is based on the following result:
\begin{restatable}{proposition}{PVEBound}
\label{prop:pve_bound}
For any $\pi \in \PS$, $v \in \FS$ and $k, n \in \mathbb{Z}^+$, we have that
\begin{equation}
\label{eq:pve_bound}
\lVert v_\pi - \Tpit^k v_\pi \rVert_\infty \le (\gamma^k + \gamma^n) \underbrace{\lVert v_\pi - v \rVert_\infty}_{\epsilon_v} + \underbrace{\lVert \Tpi^n v  - \Tpit^k v \rVert_\infty}_{\epsilon_{ve}}.
\end{equation}
\end{restatable}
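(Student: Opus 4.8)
The plan is a straightforward three-term triangle inequality that threads through the four quantities appearing in the bound, combined with the contraction property of Bellman operators. The two facts I would rely on are: (i) $\Tpi$ is a $\gamma$-contraction in $\lVert\cdot\rVert_\infty$ whose fixed point is $v_\pi$, so $\Tpi^n v_\pi = v_\pi$ and $\lVert \Tpi^n v - v_\pi\rVert_\infty = \lVert \Tpi^n v - \Tpi^n v_\pi\rVert_\infty \le \gamma^n \lVert v - v_\pi\rVert_\infty$; and (ii) the model's operator $\Tpit$ is likewise a $\gamma$-contraction (it is the Bellman operator of the MDP $\mt$ with the same discount $\gamma$), so $\lVert \Tpit^k v - \Tpit^k v_\pi\rVert_\infty \le \gamma^k \lVert v - v_\pi\rVert_\infty$.

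First I would insert the intermediate points $\Tpi^n v$ and $\Tpit^k v$ between $v_\pi$ and $\Tpit^k v_\pi$, giving
\begin{equation}
\lVert v_\pi - \Tpit^k v_\pi\rVert_\infty \le \lVert v_\pi - \Tpi^n v\rVert_\infty + \lVert \Tpi^n v - \Tpit^k v\rVert_\infty + \lVert \Tpit^k v - \Tpit^k v_\pi\rVert_\infty .
\end{equation}
Then I would bound the first summand by $\gamma^n \epsilon_v$ using fact (i) (rewriting $v_\pi = \Tpi^n v_\pi$ and applying the $n$-fold contraction), recognize the middle summand as exactly $\epsilon_{ve}$, and bound the third summand by $\gamma^k \epsilon_v$ using fact (ii) (the $k$-fold contraction of $\Tpit$). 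Summing these three bounds yields $(\gamma^k + \gamma^n)\epsilon_v + \epsilon_{ve}$, which is the claim.

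There is essentially no hard step here; the only thing to get right is the choice of intermediate points so that each leg of the triangle inequality lands on a term that is either a contraction estimate against $v_\pi$ or the definition of $\epsilon_{ve}$. If one wanted to be careful, the one point worth stating explicitly is why $\Tpit$ is a $\gamma$-contraction — namely that $\mt=(\rt,\pt)$ is an MDP model sharing the environment's discount $\gamma$, so the standard argument ($\lVert \Tpit u - \Tpit u'\rVert_\infty \le \gamma \lVert u - u'\rVert_\infty$ for all $u,u'\in\FS$) applies verbatim and iterates to $\gamma^k$. I would note that the proposition is stated for $k,n\in\mathbb{Z}^+$, so all the iterates are well defined and no $n=0$ edge case arises.
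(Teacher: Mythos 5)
Your proposal is correct and matches the paper's proof essentially verbatim: the paper arrives at the same three-term decomposition $\lVert v_\pi - \Tpi^n v\rVert_\infty + \lVert \Tpi^n v - \Tpit^k v\rVert_\infty + \lVert \Tpit^k v - \Tpit^k v_\pi\rVert_\infty$ (by inserting the two intermediate points in two successive triangle-inequality steps rather than one), and bounds the first and third terms by $\gamma^n\epsilon_v$ and $\gamma^k\epsilon_v$ using exactly the contraction facts you cite.
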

Note that the left-hand side of~(\ref{eq:pve_bound}) corresponds to one of the terms of the PVE loss~(\ref{eq:pve_loss}) associated with a given $\pi$. This means that, instead of minimizing this quantity directly, one can minimize the upper-bound on the right-hand side of~(\ref{eq:pve_bound}). The first term in this upper bound, $\epsilon_v$, is the conventional value-function approximation error that most value-based methods aim to minimize (either directly or indirectly). The second term, $\epsilon_{ve}$, is similar to the terms appearing in the order-$k$ VE loss~(\ref{eq:k_ve_loss}), except that here the number of applications of $\Tpi$ and of its approximation $\Tpit$ do not have to coincide. 

All the quantities appearing in $\epsilon_{ve}$ are readily available or can be easily approximated using sample transitions~\cp{grimm2020value}. Thus, $\epsilon_{ve}$ can be used to refine the model \mt\ using functions $v$ that are not necessarily value functions. As $v \rightarrow v_\pi$, two things happen. First, $\epsilon_{ve}$ approaches one of the terms of the PVE loss~(\ref{eq:pve_loss}) associated with policy $\pi$. Second, $\epsilon_v$ vanishes. Interestingly, the importance of $\epsilon_v$ also decreases with $n$ and $k$, the number of times $\Tpi$ and $\Tpit$ are applied in $\epsilon_{ve}$, respectively. This makes sense: since $\Tpi^n v \rightarrow v_\pi$ as $n \rightarrow \infty$ and, by definition, VE approaches PVE as $k \rightarrow \infty$, we have that $\epsilon_{ve}$ approaches the left-hand side of~(\ref{eq:pve_bound}) as both $n$ and $k$ grow.

\subsection*{An extended example: MuZero through the lens of value equivalence}
% \vspace{-5pt}
\label{sec:muzero_connection}

\ct{grimm2020value} suggested that the VE principle might help to explain the empirical success of recent RL algorithms like Value Iteration Networks, the Predictron, Value Prediction Networks, TreeQN, and MuZero~\cp{tamar2016value,silver2017predictron,oh2017value,farquhar2018treeqn,schrittwieser2020mastering}. In this section we investigate this hypothesis further and describe a possible way to interpret one of these algorithms, MuZero, through the lens of VE. We acknowledge that the derivation that follows abstracts away many details of MuZero and involves a few approximations of its mechanics, but we believe it captures and explains the algorithm's essence.

MuZero is a model-based RL algorithm that achieved state-of-the-art performance across both board games, such as Chess and Go, and Atari 2600 games~\cp{schrittwieser2020mastering}.
The model $\tilde{m}$ in MuZero is trained on sequences of states, actions and rewards resulting from executing a ``behavior policy'' in the environment: $s_{t:t+n+K}, a_{t:t+n+K}, r_{t:t+n+K}$ where $n$ and $K$ are hyperparameters of the agent which will be explained shortly. The agent produces an ``agent state'' $z^0_t$ from $s_t$ and subsequently generates $z^{1:K}_t$ by using its model to predict the next $K$ agent states following actions $a_{t:t+K}$. The agent also maintains reward and value function estimates as a function of agent states, which we denote $\tilde{r}(z)$ and $v(z)$ respectively. A variant\footnote{In reality MuZero uses a categorical representation for its value and reward functions and minimizes them using a cross-entropy objective. We argue that this choice is not essential to its underlying ideas and use scalar representations with a squared loss to simplify our analysis.} of MuZero's per-state model loss can thus be expressed as:
\begin{equation}
\label{eq:model_loss}
\ell^{\mu}(s_t) =  \sum_{k=0}^K (V_{t+k} - v(z^k_t))^2  + ( r_{t+k} - \tilde{r}(z_t^k)) )^2
\end{equation}
where $V_{t+k} = r_{t+k} + \cdots + \gamma^{n-1} r_{t+k+n-1} + \gamma^n v^{targ}(z^0_{t+k+n})$. The term $v^{targ}$ is a value target produced by Monte-Carlo tree search (MCTS, \citep{coulom2006efficient}). Because the behavior policy is itself computed via MCTS, we have that $v^{targ} \approx v$; for simplicity we will assume that $v^{targ} = v$ and only use $v$.

%In what follows we show that minimizing MuZero's loss in turn minimizes the upper-bound on a corresponding  PVE loss given by the LHS of \eqref{eq:pve_bound} where terms $n$ and $k$ have been replaced by agent hyperparameters $n + K$ and $K$ respectively:
In what follows we show, subject to a modest smoothness assumption, that minimizing MuZero's loss with respect to its behavior policy, $\pi$, also minimizes a corresponding PVE loss. Put precisely:
\begin{equation}
C \cdot \mathbb{E}_{d_\pi}[ \ell^\mu(S_t) ] \geq \big( \ell^K_{\{ \pi \}, \infty}(m^*, \tilde{m}) \big)^2
\end{equation}
for some $C > 0$, where $d_\pi$ is a stationary distribution. We proceed by combining two derivations: a lower-bound on $\mathbb{E}_{d_\pi}[\ell^\mu(S_t)]$ in \eqref{eq:loss_side_bound}, and an upper-bound on $(\ell^K_{\{ \pi \}, \infty}(m^*, \tilde{m}))^2$ in \eqref{eq:square_jensens}. 

As a preliminary step we note that $\ell^\mu(s_t)$ and $\ell^K_{\{\pi\}, \infty}(m^*, \tilde{m})$ are expressed in terms of samples and expectations respectively. We note the following connection between these quantities: 
\begin{equation}
\label{eq:expectations}
\begin{aligned}
& \mathbb{E}[r_{t+k} | s_t] = \mathcal{P}^k_\pi[r_\pi](s_t),
& \mathbb{E}[V_{t+k} | s_t] = \mathcal{P}^k_\pi \mathcal{T}^n_\pi [v_\pi](s_t), \\
& \quad \mathbb{E}[\tilde{r}(z^k_t) | s_t] = \tilde{\mathcal{P}}^k_\pi [\tilde{r}_\pi](s_t),
&\mathbb{E}[v(z^k_t) | s_t] = \tilde{\mathcal{P}}^k_\pi[v_\pi](s_t),
\end{aligned}
\end{equation}
where $\mathcal{P}^k_\pi$ is the $k$-step environment transition operator under policy $\pi$: $\mathcal{P}^k_\pi[x](s_t) = \mathbb{E}[x(S_{t+k}) | s_t, m^*, \pi]$, $r_\pi(s) = \mathbb{E}_{A \sim \pi}[r(s, A)]$ and $\tilde{\mathcal{P}}^k_\pi$ and $\tilde{r}_\pi$ are the corresponding quantities using the model instead of the environment. The above expectations are taken with respect to the environment or model and $\pi$ as appropriate.
We now derive our lower-bounds on $\mathbb{E}_{d_\pi}[\ell^\mu(S_t)]$:
\begin{equation}
\label{eq:jensen}
\begin{aligned}
\mathbb{E}_{d_\pi}[\ell^{\mu}(S_t)] &= \mathbb{E}_{d_\pi}\Big[\sum_{k=0}^K \mathbb{E}[\,( V_{t+k} - v(z_t^k))^2 \mid S_t\,] + \sum_{k=0}^K \mathbb{E}[\,  (r_{t+k} -\tilde{r}(z_t^k))^2 \mid S_t\,]\Big] \\
& \geq 
    \mathbb{E}_{d_\pi}\Big[\sum_{k=0}^K ( \mathbb{E}[\,V_{t+k} \mid S_t\,] - 
    \mathbb{E}[\,v(z^k_t) \mid S_t\,])^2 + \sum_{k=0}^K  ( \mathbb{E}[\,r_{t+k} \mid S_t\,] - \mathbb{E}[\,\tilde{r}(z^k_t) \mid S_t\,] )^2\Big] \\
& =
    \sum_{k=0}^K \mathbb{E}_{d_\pi}\Big[( \mathcal{P}^k_\pi \mathcal{T}^n_\pi v(S_t) - \tilde{\mathcal{P}}^k_\pi v(S_t) )^2\Big] + \sum_{k=0}^K \mathbb{E}_{d_\pi}\Big[( \mathcal{P}^k_\pi r_\pi(S_t) - \tilde{\mathcal{P}}^k_\pi \tilde{r}_\pi(S_t))^2\Big]
\end{aligned}
\end{equation}
where we apply the tower-property, Jensen's inequality and the identities in~\eqref{eq:expectations}. We write the expression using norms and drop all terms except $k \in \{0, K\}$ in the first sum to obtain:
\begin{equation}
\label{eq:loss_side_bound}
\mathbb{E}_{d_\pi}[\ell^\mu(S_t)] \geq \| \mathcal{T}^n_\pi v - v \|^2_{d_\pi} + \| \mathcal{P}^K_\pi \mathcal{T}^n_\pi v - \tilde{\mathcal{P}}^K_\pi v \|^2_{d_\pi} + \sum_{k=0}^K \| \mathcal{P}^k_\pi r_\pi - \tilde{\mathcal{P}}^k_\pi \tilde{r}_\pi \|^2_{d_\pi}
\end{equation}
recalling that $\| x - y \|_{d_\pi}^2 = \mathbb{E}_{d_\pi}[(x(S_t) - y(S_t))^2]$.  
To derive an upper-bound for $(\ell^K_{\{ \pi \}, \infty}(m^*, \tilde{m}))^2$ we assume that the error in value estimation is smooth in the sense that there is some $g > 0$ (independent of $v$) such that  $\| v - v_\pi \|_\infty < g \cdot \| v - v_\pi \|_{d_\pi}$. We can then use a modified version of \eqref{eq:pve_bound} for the $d_\pi$-weighted $\ell_2$-norm (see Appendix~\ref{sec:proofs}), plugging in $n + K$ and $K$:
\begin{equation}
\label{eq:upper_bound_of_upper_bound}
\begin{aligned}
\| v_\pi - \tilde{\mathcal{T}}^K_\pi v_\pi \|_{d_\pi} 
&\leq (g\gamma^K + \gamma^{n+K}) \| v_\pi - v \|_{d_\pi} + \| \mathcal{T}^{K+n}_\pi v - \tilde{\mathcal{T}}^K_\pi v \|_{d_\pi}\\
&\leq \gamma^K(g + \gamma^n) \| v_\pi - v \|_{d_\pi} + \| \mathcal{P}^K_\pi \mathcal{T}^n_\pi v - \tilde{\mathcal{P}}^K_\pi v \|_{d_\pi} + \sum_{k=0}^K \| \mathcal{P}^k_\pi r_\pi - \tilde{\mathcal{P}}_\pi^k \tilde{r}_\pi \|_{d_\pi} \\
&\leq \gamma^K \frac{(g + \gamma^n)}{(1 - \gamma^n)} \| \mathcal{T}^n_\pi v - v \|_{d_\pi} + \| \mathcal{P}^K_\pi \mathcal{T}^n_\pi v - \tilde{\mathcal{P}}^K_\pi v \|_{d_\pi} + \sum_{k=0}^K \| \mathcal{P}^k_\pi r_\pi - \tilde{\mathcal{P}}_\pi^k \tilde{r}_\pi \|_{d_\pi},
\end{aligned}
\end{equation}
from here we can square both sides and apply Jensen's inequality,
\begin{equation}
\label{eq:square_jensens}
\begin{aligned}
\| v_\pi - \tilde{\mathcal{T}}_\pi^K v_\pi \|^2_{d_\pi} \leq ab \| \mathcal{T}^\pi_n v - v \|^2_{d_\pi} + b \| \mathcal{P}^K_\pi \mathcal{T}^n_\pi v - \tilde{\mathcal{P}}^K_\pi v \|^2_{d_\pi} + b \sum_{k=0}^K \| \mathcal{P}^k_\pi r_\pi - \tilde{\mathcal{P}}_\pi^k  \tilde{r}_\pi \|^2_{d_\pi},
\end{aligned}
\end{equation}
where $a = \gamma^K (g + \gamma^n) (1 - \gamma^n)^{-1}$ and $b = a + K + 2$.
Combining \eqref{eq:square_jensens} and \eqref{eq:loss_side_bound} we obtain:
\begin{equation}
\label{eq:conclusion}
ab \cdot \mathbb{E}_{d_\pi}[\ell^{\mu}(S_t)] \geq \| v_\pi - \tilde{\mathcal{T}}_\pi^K v_\pi \|_{d_\pi}^2 = \big(\ell^K_{\{ \pi \}, \infty}(m^*, \tilde{m})\big)^2,
\end{equation}
thus minimizing MuZero's loss minimizes a squared PVE loss with respect to a single policy. 
\section{Experiments}
% \vspace{-5pt}
\label{sec:experiments}
\begin{figure}[t!]
\centering
\small\addtolength{\tabcolsep}{-6pt}
\begin{subfigure}[l]{0.99\textwidth}
\begin{tabular}[c]{cccc}
    \begin{subfigure}[c]{0.19\textwidth}
      \includegraphics[width=\textwidth]{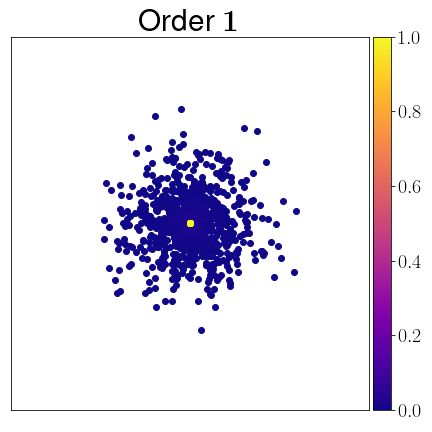}
    \end{subfigure}&
    \begin{subfigure}[c]{0.19\textwidth}
      \includegraphics[width=\textwidth]{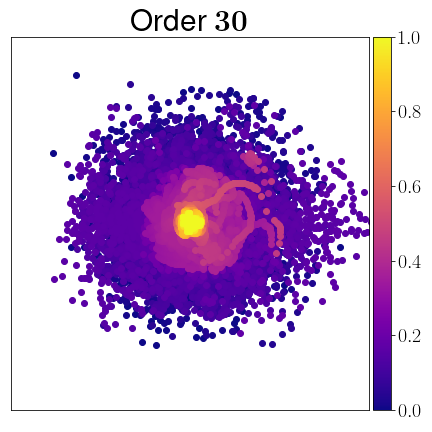}
    \end{subfigure}&
    \begin{subfigure}[c]{0.19\textwidth}
      \includegraphics[width=\textwidth]{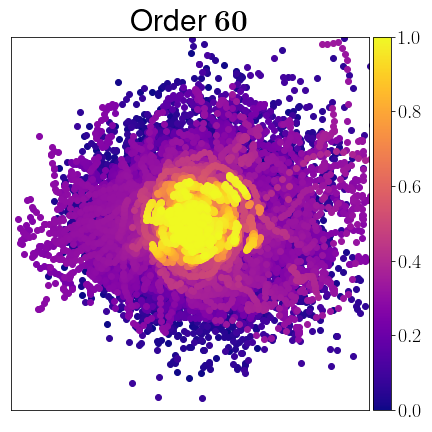}
    \end{subfigure}&
    \begin{subfigure}[c]{0.19\textwidth}
      \includegraphics[width=\textwidth]{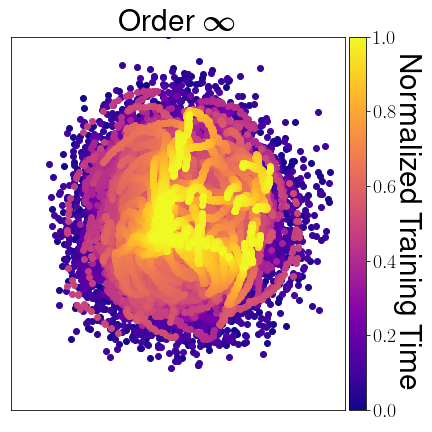}
    \end{subfigure}\\
    \begin{subfigure}[c]{0.19\textwidth}
      \includegraphics[width=\textwidth]{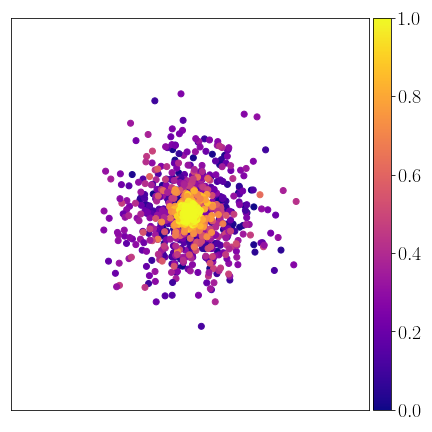}
    \end{subfigure}&
    \begin{subfigure}[c]{0.19\textwidth}
      \includegraphics[width=\textwidth]{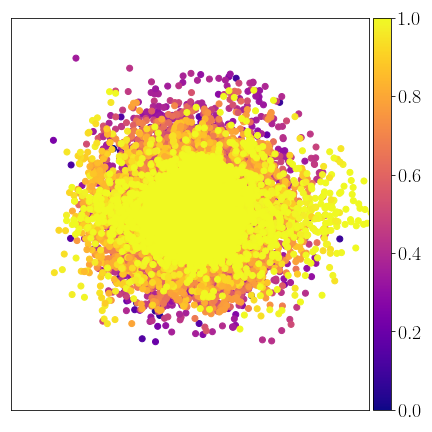}
    \end{subfigure}&
    \begin{subfigure}[c]{0.19\textwidth}
      \includegraphics[width=\textwidth]{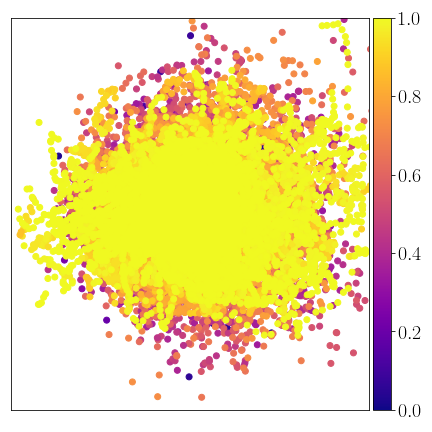}
    \end{subfigure}&
    \begin{subfigure}[c]{0.19\textwidth}
      \includegraphics[width=\textwidth]{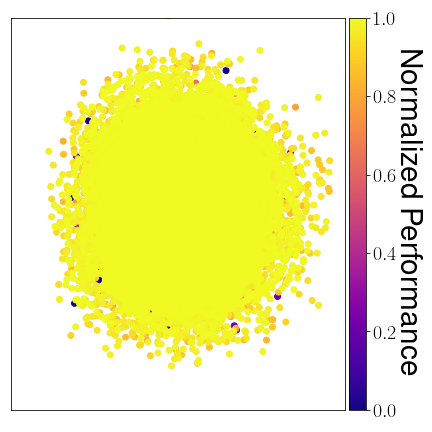}
    \end{subfigure}
  \end{tabular}
\end{subfigure}%
\begin{subfigure}[c]{0.25\textwidth}
\hspace{-95pt}
  \includegraphics[width=\textwidth]{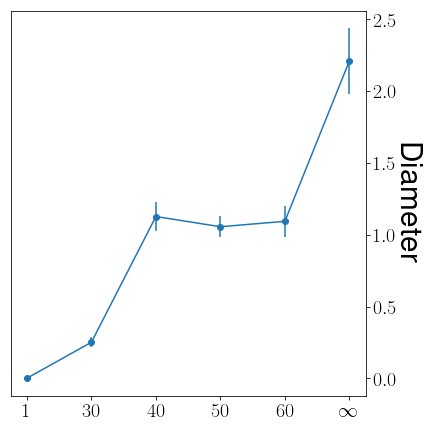}
\end{subfigure}
\caption{All scatter plots are generated by tracking the training progress over 500,000 iterations of models with different order-$k$ VE objectives. In each plot 120 models were tracked; at every 1000 timesteps the full set of models is converted into vector form and projected onto their first two principal components before being plotted (details in the appendix). \textbf{Top row:} points are colored according to their progress through training. \textbf{Bottom row:} points are colored according to the average value of the associated  model's optimal policy on the environment. \textbf{Rightmost plot:} line-plot of the diameters of these scatter plots against the model-class order.}
\label{fig:scatter_plots}
% \vspace{-15pt}
\end{figure} 

We first provide results from tabular experiments on a stochastic version of the Four Rooms domain which serve to corroborate our theoretical claims. Then, we present results from experiments across the full Atari 57 benchmark~\citep{bellemare2013arcade} showcasing that the insights from studying PVE and its relationship to MuZero can provide a benefit in practice at scale. See Appendix~\ref{sec:experimental_details} for a full account of our experimental procedure. 

In Section~\ref{sec:pve} we described the topological relationships between order-$k$ and PVE classes. 
This is summarized by Proposition~\ref{property:limiting_model_classes}, which shows that, for appropriately defined \V\ and $\Pi$, $\M^k \subseteq \M^K$ if $K$ is a multiple of $k$.
%shows for any $\Pi$ and certain sets of functions, $\V$, that  $\M^k(\Pi, \V) \subseteq \M^K(\Pi, \V)$ whenever $k$ divides $K$. 
%We illustrate this property by  training batches of models to be in  $\M^{k}(\PS, \FS)$ for each $k \in \{1, 30, 40, 50, 60, \infty \}$ and visualizing both their paths through model space and planning performance on the environment throughout training.
% Each scatter plot in Figure~\ref{fig:scatter_plots} contains points representing models at different times throughout training.
We illustrate this property empirically by randomly initializing a set of models and then using~\eqref{eq:k_ve_loss} (or~\eqref{eq:pve_loss} for the limiting case of $k=\infty$) to iteratively update them towards  $\M^{k}(\PS, \FS)$, with $k \in \{1, 30, 40, 50, 60, \infty \}$. We take the vectors representing these models and project them onto their first two principal components in order to visualise their paths through learning.  The results are shown on the top row of Figure~\ref{fig:scatter_plots}.
In accordance with the theory, we see that the space of converged models, represented with the brightest yellow regions, grows with $k$.
This trend is summarised in the rightmost plot, which shows the diameter of the scatter plots for each $k$.
In the bottom row of Figure~\ref{fig:scatter_plots} we use color to show the value that the optimal policy of each model achieves in the true environment.
As predicted by our theory, the space of models that are sufficient for optimal planning also grows with $k$.

%From left-to-right, the top row illustrates that the space of converged models, represented by the brightest yellow regions, is growing with $k$ in accordance with Proposition~\ref{property:limiting_model_classes}. Moreover, the bottom row suggests that the space of models with optimal planning performance is growing as well.

% \vspace{-20pt}
\begin{figure}[b!]
\centering
\begin{tabular}[c]{cc}
\begin{subfigure}[c]{0.35\textwidth}
\hspace{-10pt}
\begin{subfigure}[c]{\textwidth}
    \includegraphics[width=\textwidth]{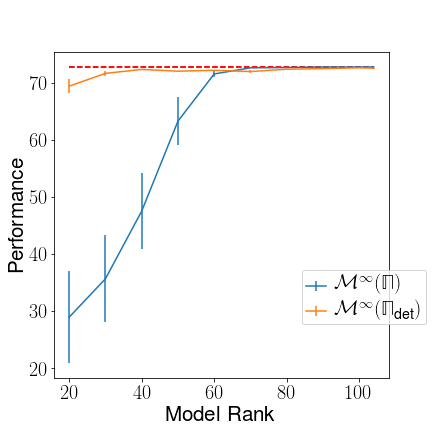}
\end{subfigure}
% \vspace{-7pt}
\caption{}
\end{subfigure}&
\hspace{-25pt}
\begin{subfigure}[c]{0.67\textwidth}
    \begin{subfigure}[c]{0.5\textwidth}
    \includegraphics[width=\textwidth]{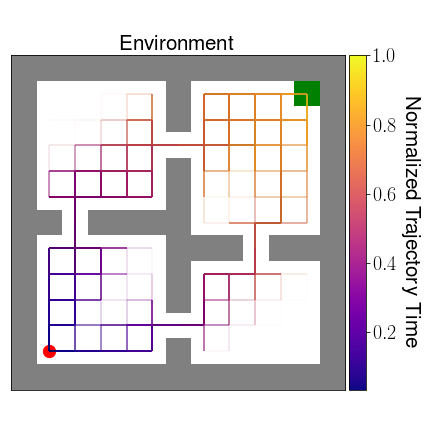}
    \end{subfigure}% 
    \begin{subfigure}[c]{0.5\textwidth}
    \includegraphics[width=\textwidth]{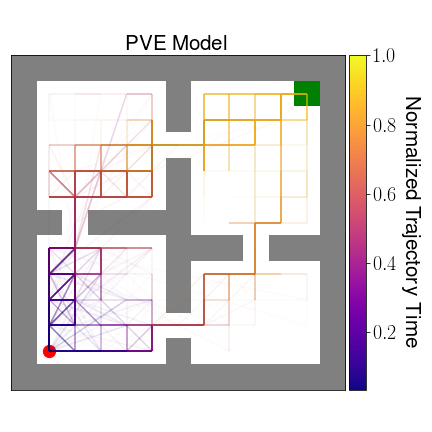}
    \end{subfigure}
    % \vspace{-7pt}
    \caption{} 
\end{subfigure}
\end{tabular}
% \vspace{-5pt}
\caption{\textbf{(a)} Comparison of the performance of optimal policies obtained from capacity constrained models trained to be in $\M^\infty(\PSdet)$ and $\M^\infty(\PS)$. For each action $a \in \A$, the transition dynamics $\tilde{P}_a$ is constrained to have a rank of at most $k$. The red dashed line represents the performance of the optimal environment policy. \textbf{(b)} Trajectories starting from the bottom-right state (red dot) sampled from the optimal environment policy in both the environment and a PVE model. Note the numerous diagonal transitions in the PVE model which are not permitted in the environment.}
\label{fig:env_pve_comparison}
\end{figure}
% \vspace{20pt}
Model classes containing many models with optimal planning performance are particularly advantageous when the set of models that an agent can represent is restricted, since the larger the set of suitable models the greater the chance of an overlap between this set and the set of models representable by the agent.
Proposition~\ref{prop:fpve_limiting_model_class_optimal2} and Corollary~\ref{prop:fpve_limiting_model_class_optimal} compared $\M^\infty(\PS)$ and $\M^\infty(\PSdet)$, showing that, although any model in either class is sufficient for planning, $\M^\infty(\PS) \subseteq \M^\infty(\PSdet)$. This suggests that it might be better to learn a model in $\M^\infty(\PSdet)$ when the agent has limited capacity. We illustrate that this is indeed the case in Figure~\ref{fig:env_pve_comparison}b. We  progressively restrict the space of models that the agent can represent and attempt to learn models in either $\M^\infty(\PS)$ or $\M^\infty(\PSdet)$. Indeed, we find that the larger class, $\M^\infty(\PSdet)$, yields superior planning performance as agent capacity decreases. 

Given their importance, we provide intuition on the ways that \textit{individual} PVE models differ from the environment. In Figure~\ref{fig:env_pve_comparison}a we compare trajectories starting at the same initial state (denoted by a red-circle) from the optimal environment policy in both the environment and in a randomly sampled model from $\M^\infty(\PS)$. In the PVE model there are numerous diagonal transitions not permitted by the environment. Note that while the PVE model has very different dynamics than the environment, these differences must ``balance out'', as it still has the same values under any policy as the environment.

\begin{wrapfigure}{R}{0.5\textwidth}
    \centering
    \includegraphics[width=0.5\textwidth]{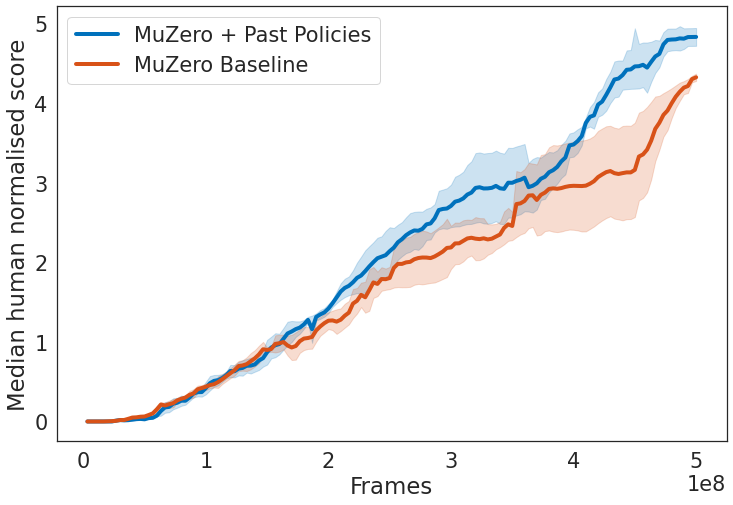}
    \caption{Comparison of our proposed modification to MuZero with an unmodified baseline.}
    \label{fig:muzero_plot}
    %\vspace{-10pt}
\end{wrapfigure}

% \vspace{15pt}
In Section~\ref{sec:muzero_connection} we showed that minimizing Muzero's loss function is analogous to minimizing an upper-bound on a PVE loss~\eqref{eq:pve_loss} with respect to the agent's current policy $\pi$---which corresponds to finding a model in $\M^\infty(\Pi)$ where $\Pi = \{ \pi \}$. Note that our guarantee on the performance of PVE models (Corollary~\ref{prop:fpve_limiting_model_class_optimal}) holds when $\Pi$ contains all deterministic policies. While it is not feasible to enforce $\Pi = \PSdet$, we can use previously seen policies by augmenting the MuZero algorithm with a buffer of past policies and their approximate value functions (we do so by periodically storing the corresponding parameters).
We can then add an additional loss to MuZero with the form of the original value loss but using the past value functions.
We still use sampled rewards to construct value targets, but use the stored policies to compute off-policy corrections using V-trace \cite{espeholt2018impala}.

To test this proposal we use an on policy (i.e., without a replay buffer) implementation of MuZero run for 500M frames (as opposed to 20B frames in the online result of~\citep{schrittwieser2020mastering}) on the Atari 57 benchmark and find that using our additional loss yields an advantage in the human normalised median performance shown in Figure \ref{fig:muzero_plot}.
MuZero's data efficiency can also be improved with the aid of a replay buffer of trajectories from past policies \citep{schrittwieser2021online}, which may also capture some of the advantages of expanding the set of policies used for PVE.

% Neither agent used data from experience replay. We found that
% \vspace{-10pt}
\section{Related work}
% \vspace{-5pt}
Our work is closely related to the value-aware model learning (VAML, IterVAML, \citep{farahmand2017value, farahmand2018iterative}) which learns models to minimize the discrepancy between their own Bellman optimality operators and the environment's on the optimal value function---an a priori unknown quantity. To handle this VAML specifies a family of potential value functions and minimizes the worst-case discrepancy across them, whereas IterVAML minimizes the discrepancy with respect to model's current estimate of the value function in a value iteration inspired scheme. PVE and the VAML family are complementary works, with VAML addressing its induced optimization problems and PVE addressing its induced model classes. 
Both, however, advocate for learning models with their eventual use in mind---a view that is aligned with many criticisms of the maximum likelihood objective for model learning \citep{farahmand2017value,joseph2013reinforcement, ayoub2020model, modi2020sample}).

It is worth mentioning the relationship between PVE and TD models \citep{sutton1995tdmodels} which, for a given policy, defines any $R \in \mathbb{R}^{|\S|}$ and $P \in \mathbb{R}^{|\S| \times |\S|}$ with $\lim_{k\to\infty} P^k = 0$ as a \textit{valid model} if $V = R + P^\top V$ where $V \in \mathbb{R}^{|\S|}$ represents $v_\pi$. Clearly all models in $\M^\infty(\{ \pi \})$ are valid models, however, since $P$ is not necessarily a transition matrix, the converse does not hold. While TD models are restricted to prediction rather than control, their generality warrants further inquiry.

%capacity to represent abstract model-like objects is worth further inquiry.  

% Other works have addressed the limitations of learning models using a maximum likelihood (MLE) objective \citep{farahmand2017value,joseph2013reinforcement, ayoub2020model}. \citet{joseph2013reinforcement} note that when the model class is limited, a standard MLE approach is not guaranteed to select the model with the highest expected return and propose an algorithm that can correctly make this selection in the limit of infinite data. \citet{ayoub2020model} assumes that models can be expressed as a linear mixtures of some base set and derives an algorithm for filtering this set based upon consistency with the agent's current value function estimate. 
Order-$k$ and PVE model classes form equivalences between MDPs and thus can be situated among other equivalence notions which can be formulated as state-aggregations \citep{dean1997model,poupart2003value,poupart2013value,givan2003equivalence,ravindran2004approximate,ferns2004metrics,spencer2009toward, taylor2008bounding, castro2020scalable, van2020plannable}. As pointed out by \citet{grimm2020value}, the interaction between arbitrary state-aggregation and models can be captured with special cases of order-one VE. Our extension of higher-order VEs potentially offers the possibility of ``blending'' existing notions of state aggregation with PVE.

A notable instance of state-aggregation is bisimulation~\citep{milner1989communication}, which uses a relation to aggregate states that have the same immediate rewards and transition dynamics into other aggregated states. Bisimulation metrics~\citep{ferns2004metrics} provide smooth measures of how closely pairs of states satisfy bisimulation relations. These concepts have become increasingly popular in deep reinforcement learning where they are used to guide the learning of effective representations~\citep{zhang2020learning, zhang2019learning, gelada2019deepmdp, agarwal2021contrastive}.
Although both bisimulation and PVE provide direction for learning internal aspects of an agent, they are fundamentally different in their purview---bisimulation concerns representation learning, while PVE concerns the learning of models given a representation of state.

Beyond bisimulation, representation learning has a wide literature  \citep{watter2015embed,igl2018deep,corneil2018efficient,franccois2019combined,biza2020learning} including several modern works which explicitly study the conjunction of model learning with state representation \citep{zhang2019solar,zhang2019learning,gelada2019deepmdp}. These are further complemented by efforts to learn state representations and models jointly in the service of value-based planning~\citep{farquhar2018treeqn,silver2017predictron,oh2017value,hessel2021muesli,schrittwieser2020mastering,tamar2016value}. 

% \vspace{-10pt}
\section{Conclusion and future work}
% \vspace{-5pt}
We extended the value equivalence principle by defining a spectrum of order-$k$ VE sets in which models induce the same $k$-step Bellman operators as the environment. We then explored the topology of the resulting equivalence classes and defined the limiting class when $k \to \infty$ as PVE. If a model is PVE to the environment with respect to a set of policies $\Pi$, then the value functions of all policies in $\Pi$ are the same in the environment and the model. The fact that PVE classes can be defined using only a set of policies eliminates the need for specifying a set of functions to induce VE---resolving a fundamental issue left open by \citet{grimm2020value}. Importantly, we showed that being PVE with respect to all deterministic policies is sufficient for a model to plan optimally in the environment. In the absence of additional information, this is the largest possible VE class that yields optimal planning. On the practical side, we showed how the MuZero algorithm can be understood as minimizing an upper bound on a PVE loss, and leveraged this insight to improve the algorithm's performance.

Though our efforts have advanced the understanding of value equivalence and proven useful algorithmically, there is still work to be done in developing a VE theory whose assumptions hold in practice.
This remaining work can be broadly grouped into two areas (1) understanding the role of approximation in VE and (2) establishing performance guarantees for VE models with arbitrary sets of policies and functions. We leave these as future work.

\newpage 

\noindent\textbf{Acknowledgements}

We thank Angelos Filos and Sonya Kotov for many thought-provoking discussions.
Christopher Grimm's work was made possible by the support of the Lifelong Learning Machines (L2M) grant from the Defense Advanced Research Projects Agency. Any opinions, findings, conclusions, or recommendations expressed here are those of the authors and do not necessarily reflect the views of the sponsors.

\bibliographystyle{plainnat}
\bibliography{references.bib}

\newpage 
\section*{Checklist}

\begin{enumerate}

\item For all authors...
\begin{enumerate}
  \item Do the main claims made in the abstract and introduction accurately reflect the paper's contributions and scope?
    \answerYes{}
  \item Did you describe the limitations of your work?
    \answerYes{}
  \item Did you discuss any potential negative societal impacts of your work?
    \answerNA{}
  \item Have you read the ethics review guidelines and ensured that your paper conforms to them?
    \answerYes{}
\end{enumerate}

\item If you are including theoretical results...
\begin{enumerate}
  \item Did you state the full set of assumptions of all theoretical results?
    \answerYes{}
	\item Did you include complete proofs of all theoretical results?
    \answerYes{See appendix.}
\end{enumerate}

\item If you ran experiments...
\begin{enumerate}
  \item Did you include the code, data, and instructions needed to reproduce the main experimental results (either in the supplemental material or as a URL)?
    \answerYes{Code for the illustrative experiments is available at a URL provided in Appendix~\ref{sec:experimental_details}. Sufficient detail for reproducability is provided in the same section.}
  \item Did you specify all the training details (e.g., data splits, hyperparameters, how they were chosen)?
    \answerYes{} 
	\item Did you report error bars (e.g., with respect to the random seed after running experiments multiple times)?
    \answerYes{}
	\item Did you include the total amount of compute and the type of resources used (e.g., type of GPUs, internal cluster, or cloud provider)?
    \answerYes{}
\end{enumerate}

\item If you are using existing assets (e.g., code, data, models) or curating/releasing new assets...
\begin{enumerate}
  \item If your work uses existing assets, did you cite the creators?
    \answerYes{}
  \item Did you mention the license of the assets?
    \answerNA{}
  \item Did you include any new assets either in the supplemental material or as a URL?
    \answerYes{}
  \item Did you discuss whether and how consent was obtained from people whose data you're using/curating?
    \answerNA{}
  \item Did you discuss whether the data you are using/curating contains personally identifiable information or offensive content?
    \answerNA{}
\end{enumerate}

\item If you used crowdsourcing or conducted research with human subjects...
\begin{enumerate}
  \item Did you include the full text of instructions given to participants and screenshots, if applicable?
    \answerNA{}
  \item Did you describe any potential participant risks, with links to Institutional Review Board (IRB) approvals, if applicable?
    \answerNA{}
  \item Did you include the estimated hourly wage paid to participants and the total amount spent on participant compensation?
    \answerNA{}
\end{enumerate}

\end{enumerate}

\newpage 
\appendix

\section{Appendix}
\label{appendix}

\subsection{Illustrative MDPs}
\label{sec:mdps}
Several proofs in \ref{sec:proofs} rely on constructing special MDPs to serve as examples or counterexamples. 
We reserve this section to describe these MDPs for later reference.

\subsubsection{Ring and false-ring MDPs}
\label{sec:rings}
We consider a simple $n$-state, $1$ action ``ring'' MDP (Figure~\ref{fig:rings}) denoted $m^n_\circ = (r, p)$ where:
\begin{equation}
\begin{aligned}
r(s_i) = g(i)\ \forall i \in [n], \hspace{20pt} p(s_{i+1} | s_{i}) = 1 \hspace{4pt} \forall i \in [n-1] \hspace{4pt} \text{ and } \hspace{4pt} p(s_1 | s_n) = 1
\end{aligned}
\end{equation}
where $g : i \mapsto \mathbb{R}$ is some function that defines the reward from transitioning away from state $i$. Since $|\A| = 1$ we omit actions from the reward and transition dynamics. 

For each ring MDP and function $g$ we additionally construct a corresponding ``false-ring'' MDP (Figure~\ref{fig:rings}) with the same state and actions spaces as $m^n_\circ$ but with states that only self-transition and with rewards designed to mimic the discounted $n$-step returns on Ring MDPs. 
We represent these as $\tilde{m}^n_\circ = (\tilde{r}, \tilde{p})$ where
\begin{equation}
\begin{aligned}
\tilde{r}(s_i) = \frac{r^n(s_i)}{\sum_{t=0}^{n-1} \gamma^t}, \hspace{20pt} \tilde{p}(s_i | s_i) = 1
\end{aligned}
\end{equation}
and $r^n(s_i)$ denotes the discounted $n$-step return starting from $s_i$ in $m^n_\circ$. Note that the discounted $n$-step return of an $n$-state false-ring MDP is the same as that of an $n$-state ring MDP. 

We now provide some basic results about pairs of ring and false-ring MDPs that we will use periodically in our proofs.

\begin{lemma}
\label{lemma:rings}
For any $n \in \mathbb{Z}^+ \cup \{ \infty \}$ if we treat the ring MDP $m^n_\circ$ as the environment and assume $\tilde{m}^n_\circ \in \mathcal{M}$ it follows that
\begin{equation}
\tilde{m}^n_\circ \in \M^n(\PS, \FS).
\end{equation}
when $n < \infty$ and
\begin{equation}
\tilde{m}^n_\circ \in \M^\infty(\PS)
\end{equation}
when $n = \infty$.
\end{lemma}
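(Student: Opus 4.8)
The plan is to exploit the fact that a ring MDP has a single action, hence $\PS = \{\pi\}$ is a singleton, so that membership in $\M^n(\PS,\FS)$ reduces to checking $\tilde{\mathcal{T}}^n v = \mathcal{T}^n v$ for every $v \in \FS$, where $\mathcal{T}$ and $\tilde{\mathcal{T}}$ denote the (action-free) Bellman operators of $m^n_\circ$ and $\tilde{m}^n_\circ$. First I would write both $n$-step operators in closed form. In $m^n_\circ$ the deterministic dynamics starting at $s_i$ traverse $s_i, s_{i+1}, \dots$ (indices mod $n$) and return to $s_i$ after exactly $n$ steps, so $\mathcal{T}^n v(s_i) = \sum_{t=0}^{n-1}\gamma^t r(s_{i+t}) + \gamma^n v(s_i) = r^n(s_i) + \gamma^n v(s_i)$, i.e.\ the discounted $n$-step return plus the discounted tail evaluated back at $s_i$. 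In $\tilde{m}^n_\circ$ every state only self-transitions, so $\tilde{\mathcal{T}}^n v(s_i) = \tilde{r}(s_i)\sum_{t=0}^{n-1}\gamma^t + \gamma^n v(s_i)$.

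The key step is then purely algebraic: by definition $\tilde{r}(s_i) = r^n(s_i)/\sum_{t=0}^{n-1}\gamma^t$, so $\tilde{r}(s_i)\sum_{t=0}^{n-1}\gamma^t = r^n(s_i)$ and hence $\tilde{\mathcal{T}}^n v(s_i) = r^n(s_i) + \gamma^n v(s_i) = \mathcal{T}^n v(s_i)$ for all $i$ and all $v \in \FS$. Since there is only the one policy, this gives $\tilde{m}^n_\circ \in \M^n(\PS,\FS)$ whenever $\tilde{m}^n_\circ \in \M$, which settles the finite case.

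For $n = \infty$ the ``return after $n$ steps'' argument is vacuous, so instead I would verify the characterization of $\M^\infty(\PS)$ from~\eqref{eq:pve-def} directly, namely $\tilde{v}_\pi = v_\pi$. In the infinite ring, $v_\pi(s_i) = \sum_{t=0}^\infty \gamma^t r(s_{i+t}) = r^\infty(s_i)$ by definition of the discounted return. In the infinite false-ring the self-loop gives $\tilde{v}_\pi(s_i) = \tilde{r}(s_i) + \gamma\tilde{v}_\pi(s_i)$, so $\tilde{v}_\pi(s_i) = \tilde{r}(s_i)/(1-\gamma) = \tilde{r}(s_i)\sum_{t=0}^\infty\gamma^t = r^\infty(s_i)$, using $\tilde{r}(s_i) = r^\infty(s_i)/\sum_{t=0}^\infty\gamma^t$. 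The two value functions coincide, so $\tilde{m}^\infty_\circ \in \M^\infty(\PS)$. (Alternatively one could obtain this by letting $k \to \infty$ in the finite result via $\M^\infty(\PS) = \lim_k \M^k(\PS,\FS)$, but the direct fixed-point computation is cleaner.)

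The computations are elementary; the only points needing care are the index bookkeeping modulo $n$, the separate treatment of the $n = \infty$ case, and noting that well-definedness of $\tilde{m}^\infty_\circ$ (convergence of $r^\infty$) relies on $\gamma < 1$ and the usual boundedness of the rewards produced by $g$. I do not anticipate a genuine obstacle beyond these bookkeeping points.
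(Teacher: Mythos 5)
Your proposal is correct and follows essentially the same route as the paper: reduce to the single policy, write both $n$-step operators in closed form so that the normalization $\tilde{r}(s_i)=r^n(s_i)/\sum_{t=0}^{n-1}\gamma^t$ cancels, and handle $n=\infty$ by directly computing $\tilde{v}_\pi(s_i)=\tilde{r}(s_i)/(1-\gamma)=v_\pi(s_i)$ from the self-loop. The only cosmetic difference is that you derive the infinite case via the fixed-point equation rather than summing the geometric series, which is the same computation.
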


\begin{proof}
First we note that, since ring and false-ring MDPs only have one action, we can write $\PS = \{ \pi \}$ where $\pi$ takes this action at all states.
We first consider the case when $n < \infty$, noting that that both MDPs are deterministic and that for any state $s$, performing $n$ transitions will always return to $s$.
We now consider an application of $n$-step Bellman operator of the false-ring model to an arbitrary function $v \in \FS$:
\begin{equation}
\begin{aligned}
\tilde{\mathcal{T}}_\pi^n v(s) &= \tilde{r}(s)(\sum_{t=0}^{n-1} \gamma^t) + \gamma^n v(s) \\ 
&= r^n(s) (\sum_{t=0}^{n-1} \gamma^t)^{-1} (\sum_{t=0}^{n-1} \gamma^t) + \gamma^n v(s) \\ 
&= r^n(s) + \gamma^n v(s) \\
&= \mathcal{T}_\pi^n v(s)
\end{aligned}
\end{equation}
implying that $\tilde{m}^n_\circ \in \M^n(\PS, \FS)$ as needed. We now consider the case when $n = \infty$:
here, we note that for any state $s \in \S$:
\begin{equation}
\tilde{r}(s) = \frac{r^\infty(s)}{ \sum_{t=0}^\infty \gamma^t} = (1 - \gamma) v_\pi(s).
\end{equation}
we can then write:
\begin{equation}
\tilde{v}_\pi(s) = \sum_{t=0}^\infty \gamma^t \tilde{r}(s) = (1 - \gamma)^{-1} (1 - \gamma) v_\pi(s) = v_\pi(s)
\end{equation}
since $\tilde{m}^\infty_\circ$ only self-transitions at each state. This shows that $\tilde{m}^\infty_\circ \in \M^\infty(\Pi)$ as needed.

\end{proof}

\begin{lemma}
\label{lemma:rings_k_step_update}
Fix any $k, K \in \mathbb{Z}^+ \cup \{ \infty \}$ with $k < K$ and let $f : \S \mapsto \mathbb{R}$ be any constant function. Let $m = m^K_\circ$ and $\tilde{m} = \tilde{m}^K_\circ$. For any $\gamma \in (0, 1)$ it follows that
\begin{equation}
\mathcal{T}_\pi^k f(s_1) \neq \tilde{\mathcal{T}}_\pi^k f(s_1)
\end{equation}
where $m^K_\circ$ and $\tilde{m}^K_\circ$ are $K$-state ring and false-ring MDPs with $g(i) = \boldsymbol{1}\{ i \in [1, k] \}$. 
\end{lemma}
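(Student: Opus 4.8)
The plan is to compute both sides of the claimed inequality explicitly and observe that they cannot be equal for any $\gamma \in (0,1)$. Since both $m = m^K_\circ$ and $\tilde m = \tilde m^K_\circ$ have a single action, we write $\PS = \{\pi\}$ as in the proof of Lemma~\ref{lemma:rings}. First I would handle the finite case $k, K < \infty$. On the false-ring MDP $\tilde m^K_\circ$, every state self-transitions, so for a constant function $f \equiv c$ we have $\tilde{\mathcal{T}}_\pi^k f(s_1) = \tilde r(s_1)\sum_{t=0}^{k-1}\gamma^t + \gamma^k c$, and by definition $\tilde r(s_1) = r^K(s_1)/\sum_{t=0}^{K-1}\gamma^t$ where $r^K(s_1)$ is the discounted $K$-step return from $s_1$ in the ring MDP with reward indicator $g(i) = \mathbf{1}\{i \in [1,k]\}$, i.e. $r^K(s_1) = \sum_{t=0}^{k-1}\gamma^t$. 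On the ring MDP, starting from $s_1$ the agent collects reward $1$ for the first $k$ steps (from states $s_1,\dots,s_k$) and $0$ thereafter, so $\mathcal{T}_\pi^k f(s_1) = \sum_{t=0}^{k-1}\gamma^t + \gamma^k c$. Thus the two quantities differ precisely by the factor $\tilde r(s_1)\sum_{t=0}^{k-1}\gamma^t$ versus $\sum_{t=0}^{k-1}\gamma^t$, and equality would force $\tilde r(s_1) = 1$, i.e. $\sum_{t=0}^{k-1}\gamma^t = \sum_{t=0}^{K-1}\gamma^t$, which is impossible for $\gamma \in (0,1)$ since $k < K$ (the extra terms $\sum_{t=k}^{K-1}\gamma^t$ are strictly positive).

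Next I would handle the cases where one or both of $k, K$ equal $\infty$. When $K = \infty$ (and $k < \infty$), the false-ring MDP has $\tilde r(s) = (1-\gamma)v_\pi(s)$ as computed in Lemma~\ref{lemma:rings}, and $v_\pi(s_1) = \sum_{t=0}^{k-1}\gamma^t = (1-\gamma^k)/(1-\gamma)$, so $\tilde r(s_1) = 1 - \gamma^k < 1$; the same computation as above gives $\tilde{\mathcal{T}}_\pi^k f(s_1) = (1-\gamma^k)\sum_{t=0}^{k-1}\gamma^t + \gamma^k c \ne \sum_{t=0}^{k-1}\gamma^t + \gamma^k c = \mathcal{T}_\pi^k f(s_1)$. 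When $k = \infty$ as well (so $K = \infty$ too, since we need $k < K$ — actually $k < K$ with both infinite is vacuous, so this subcase does not arise and need not be treated). When $k < \infty$ but $K = \infty$ is the remaining genuine case and is covered above; when $k = \infty$ is forced to coincide with $K = \infty$ the hypothesis $k < K$ fails, so there is nothing to prove.

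The only mildly delicate point — and the one I would be most careful about — is getting the value of $r^K(s_1)$ right for the specific reward function $g(i) = \mathbf{1}\{i \in [1,k]\}$ in the ring MDP: one must check that exactly the states visited in the first $k$ transitions from $s_1$ are $s_1, \dots, s_k$, which requires $k \le K$ (true since $k < K$) so that the ring does not wrap around and re-collect rewards within the first $k$ steps, and also that the $K$-step return only sees these $k$ unit rewards. This bookkeeping is routine once one writes $r^K(s_1) = \sum_{t=0}^{K-1}\gamma^t g(\sigma^t(1))$ where $\sigma$ is the cyclic successor map, and notes $g(\sigma^t(1)) = \mathbf{1}\{t < k\}$ for $t \in \{0,\dots,K-1\}$. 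Everything else is elementary geometric-series manipulation, and the strict inequality $\gamma \in (0,1)$ is exactly what rules out the degenerate equality.
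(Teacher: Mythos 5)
Your proposal is correct and follows essentially the same route as the paper's proof: compute both $k$-step Bellman updates on the constant function explicitly, observe that the ring yields $r^K(s_1) + \gamma^k c$ while the false-ring yields $r^K(s_1)\bigl(\sum_{t=0}^{K-1}\gamma^t\bigr)^{-1}\sum_{t=0}^{k-1}\gamma^t + \gamma^k c$, and note that equality would force $\sum_{t=0}^{k-1}\gamma^t = \sum_{t=0}^{K-1}\gamma^t$, which fails for $\gamma \in (0,1)$ and $k < K$. Your explicit evaluation $r^K(s_1) = \sum_{t=0}^{k-1}\gamma^t$ and the remark that the case $k=\infty$ is vacuous are minor elaborations the paper leaves implicit.
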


\begin{proof}
We begin by examining the $k$-step Bellman operator and Bellman fixed-point under the ring $m^K_\circ$:
\begin{equation}
\label{eq:ring_update}
\mathcal{T}_\pi^k f(s_1) = r^k(s_1) + \gamma^k f(s_1) = r^K(s_1) + \gamma^k f(s_1)
\end{equation}
where the second equality follows from the fact that $g$ ensures that no reward is received after the first $k$ steps from $s_1$. 

Next we examine the corresponding $k$-step Bellman operator under the false-ring $\tilde{m}^K_\circ$:
\begin{equation}
\label{eq:false_ring_update}
\tilde{\mathcal{T}}_\pi^k f(s_1) = \tilde{r}^k(s_1) + \gamma^k f(s_1) = r^K(s_1)(\sum_{t=0}^{K-1} \gamma^t)^{-1} \sum_{t=0}^{k-1} \gamma^t + \gamma^k f(s_1)
\end{equation}
where the second equality follows from the construction of $K$-step false-ring MDPs to match the $K$-step returns of their corresponding ring MDP.

Taken together Eqs.~(\ref{eq:ring_update}-\ref{eq:false_ring_update}) imply that in order for $\mathcal{T}_\pi^k f(s_1) = \tilde{\mathcal{T}}_\pi^k f(s_1)$ it must be the case that $\sum_{t=0}^{K-1} \gamma^t = \sum_{t=0}^{k-1} \gamma^t$ which can only happen when $\gamma = 0$.
Note that these properties hold when $K = \infty$.
This completes the proof.
\end{proof}

\begin{figure}
\centering
\begin{subfigure}{.25\textwidth}
  \centering
  \includegraphics[width=\linewidth]{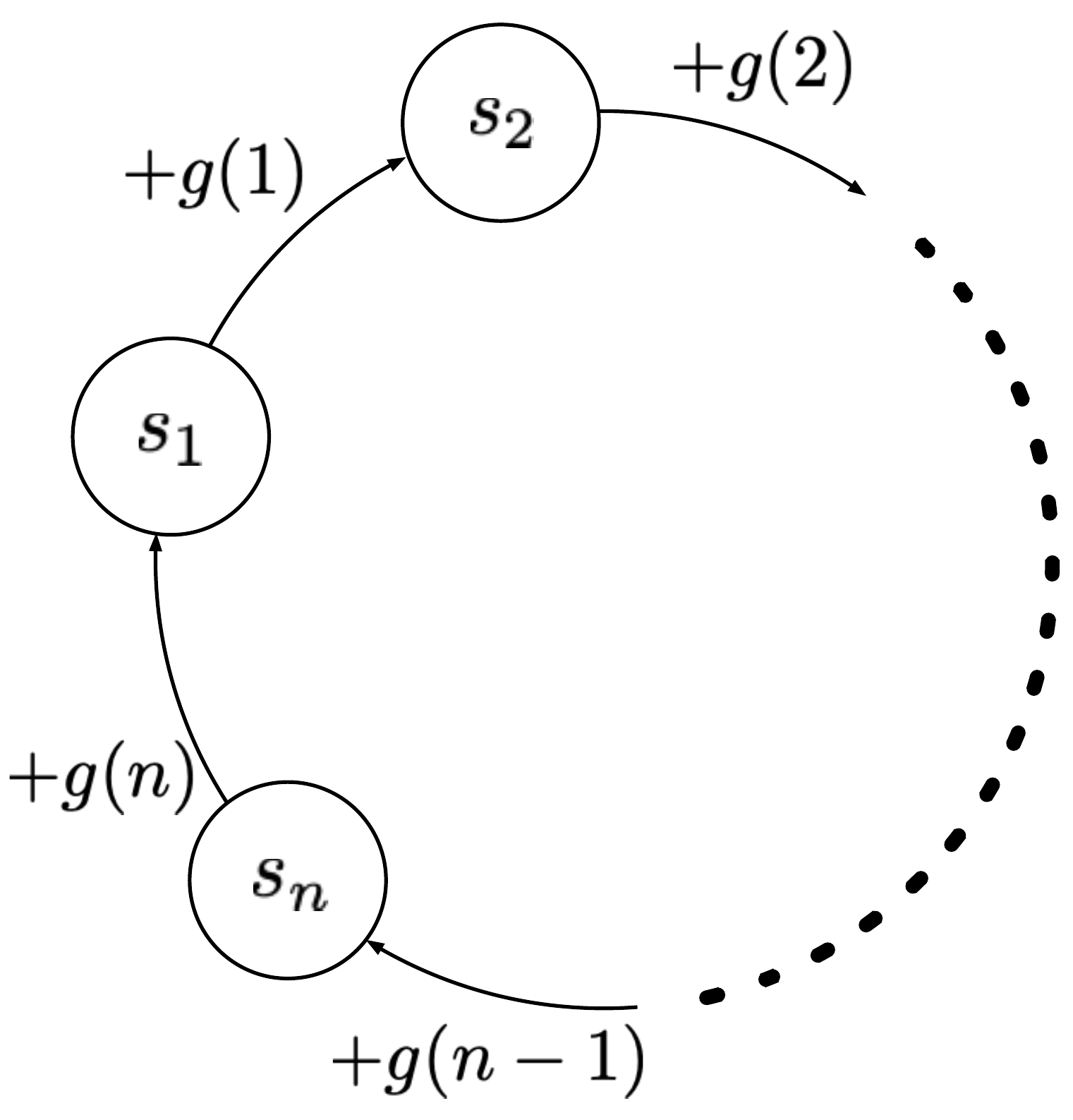}
  \caption{}
  \label{fig:sub2}
\end{subfigure}%
\begin{subfigure}{.25\textwidth}
  \centering
  \includegraphics[width=\linewidth]{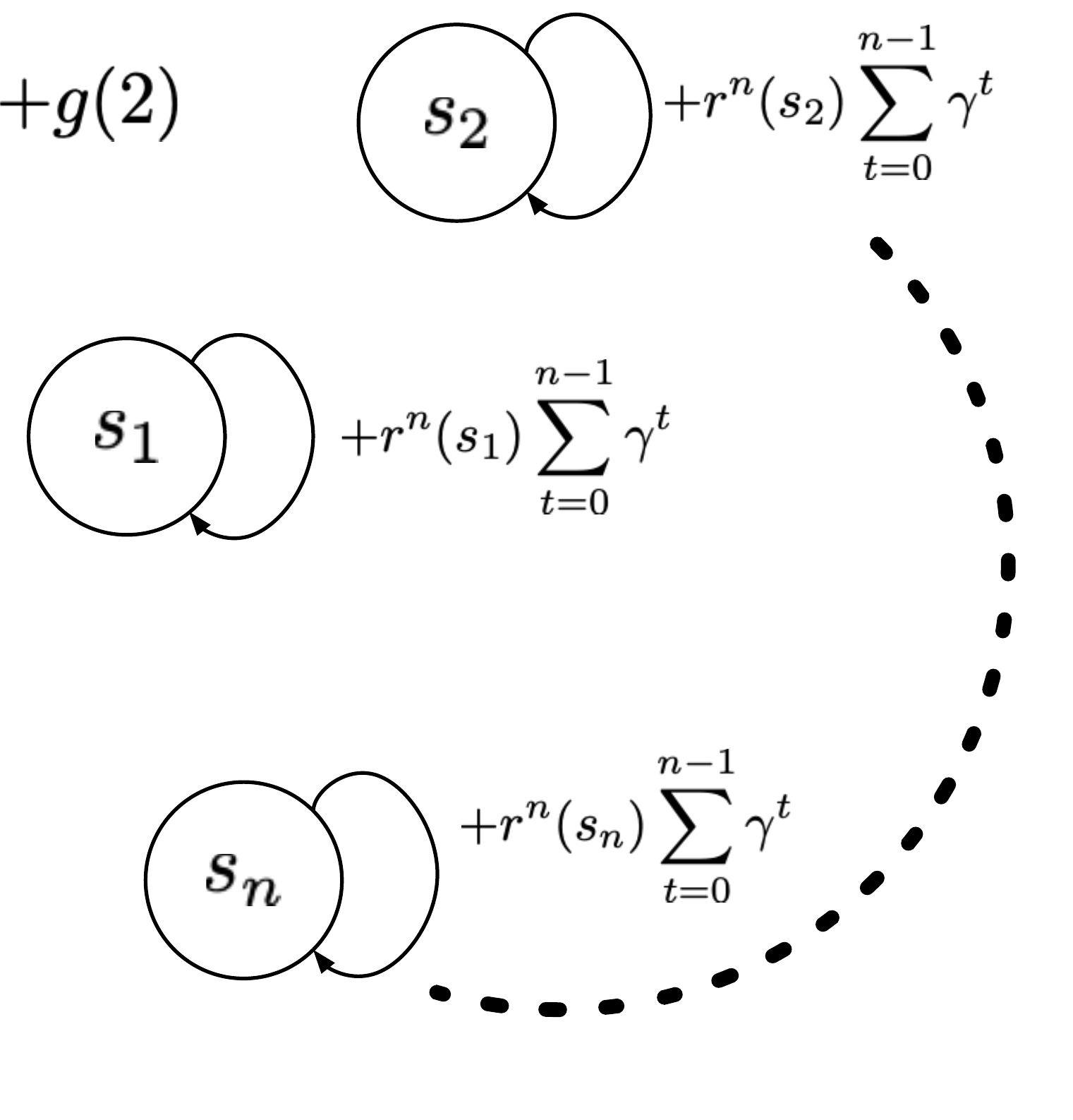}
  \caption{}
  \label{fig:sub2}
\end{subfigure}
% \begin{subfigure}{.33\textwidth}
%   \centering
%   \includegraphics[width=\linewidth]{}
%   \caption{}
%   \label{fig:sub2}
% \end{subfigure}
\caption{Ring and false-ring environments with reward structure defined by $g : \mathbb{Z}^+ \mapsto \mathbb{R}$. States are numbered circles and outgoing arrows indicate possible transitions from each state. Arrows are labeled by the reward attained from performing their transition. }
\label{fig:rings}
\end{figure}

\subsection{Proofs}
\label{sec:proofs}
In this section we provide proofs of the results in the main text.

% \mcSubset*
% \begin{proof}
% Consider the definition of FPVE model classes:
% \begin{equation}
% \begin{aligned}
% {\mathcal{M}'}^\infty(\Pi) &= \{ m \in \mathcal{M}' : v^\pi_m = v^\pi \ \forall \pi \in \Pi \} \\ 
% &\subseteq \{ m \in \mathcal{M} : v^\pi_m = v^\pi \} \\ 
% &= \mathcal{M}^\infty(\Pi)
% \end{aligned}
% \end{equation}
% when $\mathcal{M}' \subseteq \mathcal{M}$.
% \end{proof}

% \piSubset*
% \begin{proof}
% Note that for any set of policies $\bar{\Pi}$ we can write $\mathcal{M}^\infty(\bar{\Pi}) = \cap_{\pi \in \bar{\Pi}} \mathcal{M}^\infty(\{ \pi \}).$ 
% Thus we can write:
% \begin{equation}
% \begin{aligned}
% \mathcal{M}^\infty(\Pi') = \bigcap_{\pi \in \Pi'} \mathcal{M}^\infty(\{ \pi \}) &= \bigcap_{\pi \in \Pi} \mathcal{M}^\infty(\{ \pi \}) \cap \bigcap_{\pi \in \Pi' \setminus \Pi} \mathcal{M}^\infty(\{ \pi \}) \\
% &= \mathcal{M}^\infty(\Pi) \cap \mathcal{M}^\infty(\Pi' \setminus \Pi) \\
% &\supseteq \mathcal{M}^\infty(\Pi)
% \end{aligned}
% \end{equation}
% as needed.
% \end{proof}

\moduloResult*
\begin{proof}
 Consider some $m \in \mathcal{M}^k(\Pi, \mathcal{V})$. 
For any $\pi \in \Pi$ and $v \in \mathcal{V}$ we know that $\tilde{\mathcal{T}}_\pi^k v = \mathcal{T}_\pi^k v$.
Since $k$ divides $K$ we know that $K = zk$ where $z \in \mathbb{Z}^+$. Hence
\begin{equation}
\label{eq:k_to_K}
\begin{aligned}
\mathcal{T}_\pi^K v = \underbrace{\mathcal{T}_\pi \cdots \mathcal{T}_\pi}_{\text{K times}} v 
= \underbrace{\mathcal{T}_\pi^k \cdots \mathcal{T}_\pi^k}_\text{z times} v 
\end{aligned}
\end{equation}

Finally since $\mathcal{V}$ is closed under Bellman updates we can write $\tilde{\mathcal{T}}_\pi^k v = \mathcal{T}_\pi^k v \in \mathcal{V}$, which allows us iteratively equate $k$-step environment and model operators on the right-hand side of Eq.~(\ref{eq:k_to_K}) to obtain:
\begin{equation}
\underbrace{\mathcal{T}_\pi^k \cdots \mathcal{T}_\pi^k}_\text{z times} v = \underbrace{\tilde{\mathcal{T}}_\pi^k \cdots \tilde{\mathcal{T}}_\pi^k}_\text{z times} v = \tilde{\mathcal{T}}_\pi^K v.
\end{equation}

This suffices to show that $m \in \mathcal{M}^K(\Pi, \mathcal{V})$ which means $\mathcal{M}^k(\Pi, \mathcal{V}) \subseteq \mathcal{M}^K(\Pi, \mathcal{V})$.

We now assume that $\V$ contains at least one constant function and $\Pi$ is non-empty and produce an instance of an environment and model class where the relation is strict. Let the environment be a $K$-state ring environment (see \ref{sec:rings}): $m^K_\circ$ with $g(i) = \boldsymbol{1}\{ i \in [1, k] \}$ and let $\M = \MS$. 
Next we introduce a model given by the corresponding false-ring MDP (see \ref{sec:rings}) $\tilde{m}^K_\circ$. 
From Lemma~\ref{lemma:rings} we have that $\tilde{m}^K_\circ \in \M^K(\Pi, \V)$.

Since there is at least one constant function $f \in \V$ we know that $\mathcal{T}_\pi^k f(s_1) \neq \tilde{\mathcal{T}}_\pi^k f(s_1)$ from Lemma~\ref{lemma:rings_k_step_update}. 
This is sufficient to show that $\tilde{m}^K_\circ \notin \M^k(\Pi, \V)$ and thus we have proven that there are instances where $\M^k(\Pi, \V) \subset \M^K(\Pi, \V)$.
\end{proof}

\FPVEDecomp*

\begin{proof}
We first note  
$\M^\infty(\Pi) = \bigcap_{\pi \in \Pi} \M^\infty(\{ \pi \})$ and consider any $m \in \M^\infty(\{ \pi \})$ for some $\pi \in \Pi$. From the definition of PVE we know $\tilde{v}_\pi = v_\pi$ and thus can say:
\begin{equation}
\begin{aligned}
&\tilde{v}_\pi = v_\pi \\
\implies &\tilde{\mathcal{T}}_\pi^k \tilde{v}_\pi = \tilde{\mathcal{T}}_\pi^k v_\pi \\
\implies &\tilde{v}_\pi = \tilde{\mathcal{T}}_\pi^k v_\pi\\
\implies &v_\pi = \tilde{\mathcal{T}}_\pi^k v_\pi \\
\implies &\mathcal{T}_\pi^k v_\pi = \tilde{\mathcal{T}}_\pi^k v_\pi 
\end{aligned}
\end{equation}
which suggests that $m \in \M^k(\{ \pi \}, \{ v^\pi \})$ and thus $\M^\infty(\{ \pi \}) \subseteq \M^k(\{ \pi \}, \{ v^\pi \})$.

We now consider any element $m \in \M^k(\{ \pi \}, \{ v^\pi \})$, and note that from the definition of order-$k$ VE we know that $\tilde{\mathcal{T}}_\pi^k v_\pi = \mathcal{T}_\pi^k v_\pi$, thus we can say:
\begin{equation}
\begin{aligned}
&\tilde{\mathcal{T}}_\pi^k v_\pi = \mathcal{T}_\pi^k v_\pi \\
\implies & \tilde{\mathcal{T}}_\pi^k v_\pi = v_\pi \\
\implies & \tilde{\mathcal{T}}_\pi^{2k} v_\pi = \tilde{\mathcal{T}}_\pi^k v_\pi \\
\implies & \tilde{\mathcal{T}}_\pi^{2k} v_\pi = v_\pi
\end{aligned}
\end{equation}
where we can repeat the process described in these implications ad-infinitum to obtain $\tilde{v}_\pi = \lim_{n \to \infty} \tilde{\mathcal{T}}_\pi^{nk} v_\pi = v_\pi$. Hence $m \in \M^\infty(\{ \pi \})$ and thus $\M^k(\{ \pi \}, \{ v^\pi \})$. 

Taken together this shows that $\M^\infty(\{ \pi \}) = \M^k(\{ \pi \}, \{ v^\pi \})$ for any $k$ and $\pi$ thus:
\begin{equation}
\M^\infty(\Pi) = \bigcap_{\pi \in \Pi} \M^\infty(\{ \pi \}) = \bigcap_{\pi \in \Pi} \M^k(\{ \pi \}, \{ v^\pi \})
\end{equation}
for any $k \in \mathbb{Z}^+$.
\end{proof}

\begin{restatable}{corollary}{limitingComparison}
\label{coro:FPVE_vs_VE_limiting}
Let $\Pi \subseteq \mathbbl{\Pi}$ and let \V\ be as in Proposition~\ref{property:limiting_model_classes} for $k \in \mathbb{Z}^+$ then we have that
$
\mathcal{M}^k(\Pi, \V) \subseteq \mathcal{M}^\infty(\Pi).
$ Moreover, if $\Pi$ is non-empty and $\V$ contains at least one constant function, then there exist environments such that $\MS^k(\Pi, \V) \subset \MS^\infty(\Pi)$
\end{restatable}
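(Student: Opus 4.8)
The plan is to avoid reasoning about the order-$\infty$ limit over \emph{all} functions and instead use the explicit description $\M^\infty(\Pi)=\{\tilde m\in\M:\tilde v_\pi=v_\pi\ \forall\pi\in\Pi\}$ coming from the definition of PVE. So fix an arbitrary $m\in\M^k(\Pi,\V)$; since $\V$ is closed under Bellman updates (and may be assumed nonempty, as otherwise $\M^k(\Pi,\V)=\M$ and there is nothing to prove), pick any $v\in\V$. Because $k$ divides every multiple $nk$, Proposition~\ref{property:limiting_model_classes}(i) yields $m\in\M^{nk}(\Pi,\V)$ for all $n\in\mathbb{Z}^+$, i.e. $\Tpi^{nk}v=\Tpit^{nk}v$ for every $\pi\in\Pi$ and every $n$. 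I would then let $n\to\infty$: since $\gamma<1$, both $\Tpi$ and the model's operator $\Tpit$ are $\gamma$-contractions, so $\Tpi^{nk}v\to v_\pi$ and $\Tpit^{nk}v\to\tilde v_\pi$ (the subsequences of the standard iterations taken at indices $0,k,2k,\dots$ converge to the respective unique fixed points). As the two sequences agree term by term, $v_\pi=\tilde v_\pi$ for all $\pi\in\Pi$, which is exactly the statement $m\in\M^\infty(\Pi)$.

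\textbf{Plan for the strict inclusion.} Here I would mirror the proof of Proposition~\ref{property:limiting_model_classes}(ii), pushing the larger order to $\infty$. Take the environment to be the ring MDP $m^\infty_\circ$ of Section~\ref{sec:rings} with reward selector $g(i)=\boldsymbol{1}\{i\in[1,k]\}$, and let $\M=\MS$; the ring has a single action, so $\PS=\{\pi\}$ and hence $\Pi=\{\pi\}$. By Lemma~\ref{lemma:rings} (the $n=\infty$ case), the associated false-ring $\tilde m^\infty_\circ$ belongs to $\MS^\infty(\PS)=\MS^\infty(\Pi)$. On the other hand, $\V$ contains a constant function $f$, so Lemma~\ref{lemma:rings_k_step_update} applied with $K=\infty$ gives $\Tpi^k f(s_1)\neq\Tpit^k f(s_1)$ (this is where $\gamma>0$ is used), hence $\tilde m^\infty_\circ\notin\MS^k(\Pi,\V)$. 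Together with the inclusion proved above this gives $\MS^k(\Pi,\V)\subset\MS^\infty(\Pi)$. (A shorter alternative: invoke Proposition~\ref{property:limiting_model_classes}(ii) with $K=2k$ to obtain an environment with $\MS^k(\Pi,\V)\subset\MS^{2k}(\Pi,\V)$, then apply the inclusion part of this corollary at order $2k$ to get $\MS^{2k}(\Pi,\V)\subseteq\MS^\infty(\Pi)$.)

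\textbf{Expected main obstacle.} There is no serious obstacle: the result essentially repackages Proposition~\ref{property:limiting_model_classes} plus a limiting argument, and the strict part is a near-verbatim reuse of the ring/false-ring counterexample. The one spot that warrants care is the limit interchange in the inclusion: one must know that the model's Bellman operator $\Tpit$ is a genuine $\gamma$-contraction, so that $\Tpit^{nk}v$ converges to the model's own value function $\tilde v_\pi$ --- which is automatic since a model is an MDP sharing the same discount $\gamma\in[0,1)$. The remaining minor points are that $\V$ must be nonempty for the inclusion to be meaningful and that $\gamma>0$ is needed for the specific environment used in the strict part.
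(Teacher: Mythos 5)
Your proposal is correct and follows essentially the same route as the paper: the inclusion is obtained by iterating Proposition~\ref{property:limiting_model_classes}(i) to get agreement at all orders $nk$ and passing to the limit via contraction of both Bellman operators, and the strict part uses exactly the paper's $\infty$-state ring/false-ring pair with $g(i)=\boldsymbol{1}\{i\in[1,k]\}$ together with Lemmas~\ref{lemma:rings} and~\ref{lemma:rings_k_step_update}. (One nit: if $\V$ were empty the inclusion would actually be false rather than vacuous, but the corollary implicitly assumes $\V$ nonempty, as does the paper's own proof, so this does not affect correctness.)
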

\begin{proof}
Consider some $m \in \mathcal{M}^k(\Pi, \mathcal{V})$.
From the generalization of Property~\ref{property:limiting_model_classes} we know that $m \in \mathcal{M}^{zk}(\Pi, \mathcal{V})$ for any $z \in \mathbb{Z}^+$ since $k$ divides $zk$.
Thus we know that $\tilde{\mathcal{T}}_\pi^{zk} v = \mathcal{T}_\pi^{zk} v$ for any choice of $\pi \in \Pi$, $v \in \mathcal{V}$ and $z \in \mathbb{Z}^+$.
Accordingly the expressions are equal in the limit as $z \to \infty$. 
Combining this with the fact that both $\tilde{\mathcal{T}}_\pi$ and $\mathcal{T}_\pi$ are contraction mappings, we obtain:
\begin{equation}
\tilde{v}_\pi = \lim_{z \to \infty} \tilde{\mathcal{T}}_\pi^{zk} v = \lim_{z \to \infty} \mathcal{T}_\pi^{zk} v = v_\pi 
\end{equation}
which implies $m \in \mathcal{M}^\infty(\Pi)$ and thus $\mathcal{M}^k(\Pi, \mathcal{V}) \subseteq \mathcal{M}^\infty(\Pi)$, as needed. 

Moreover, so long that $\Pi$ is nonempty and $\V$ contains some constant function $f$, we can construct a pair of $\infty$-state ring / false-ring MDPs: $m^\infty_\circ$ and $\tilde{m}^\infty_\circ$ with $g(i) = \boldsymbol{1}\{ i \in [1, k] \}$ (see \ref{sec:rings}).
By assuming that $m^\infty_\circ$ is the environment, Lemma~\ref{lemma:rings} tells us that $\tilde{m}^\infty_\circ \in \M^\infty(\Pi)$ and 
we know from Lemma~\ref{lemma:rings_k_step_update} that $\mathcal{T}_\pi^kf(s_1) \neq \tilde{\mathcal{T}}_\pi^k f(s_1)$ hence $\tilde{m}^\infty_\circ \notin \M^k(\Pi)$.
\end{proof}

\irrelevantState*

\begin{proof}
Assume that $\mathcal{M} = \MS$. Denote the environment reward and transition dynamics as $(r, p)$. 
For any value $y_0 \in \mathcal{Y}$ we consider a model $m_{y_0}$:
\begin{equation}
\begin{aligned}
&r_{m_{y_0}}((x, y), a) = r((x, y), a) \\
&p_{m_{y_0}}((x', y')| (x, y), a) = \boldsymbol{1}\{ y' = y_0 \} p(x' | (x, y), a). 
\end{aligned}
\end{equation}

We now examine the Bellman fixed-point induced by environment for any policy $\pi \in \Pi$:
\begin{equation}
\begin{aligned}
v_\pi((x, y)) &= \int_{\mathcal{A}} \pi(a | (x, y)) r((x, y), a) + \gamma \int_\mathcal{X} \int_\mathcal{Y} p((x',y') | (x, y), a) v_\pi((x', y')) dx' dy' da \\
&= \int_{\mathcal{A}} \pi(a | (x, y)) r((x, y), a) + \gamma \int_\mathcal{X} \int_\mathcal{Y} p( x' | (x, y), a) p(y' | x', (x, y), a) v_\pi(x') dx' dy' da \\
&= \int_{\mathcal{A}} \pi(a | (x, y)) r((x, y), a) + \gamma \int_\mathcal{X} p(x' | (x, y), a) v_\pi(x') dx' da.
\end{aligned}
\end{equation}

We can compare this to the Bellman operator induced by our model for the same policy:
\begin{equation}
\begin{aligned}
\tilde{\mathcal{T}}_\pi v((x, y)) &= \int_{\mathcal{A}} \pi(a | (x, y)) r((x, y), a) + \gamma \int_{\mathcal{X}} \int_{\mathcal{Y}} \boldsymbol{1}\{ y' = y_0 \} p(x' | (x, y), a) v((x, y)) dx' dy' da\\
&= \int_{\mathcal{A}} \pi(a | (x, y)) r((x, y), a) + \gamma \int_{\mathcal{X}} p(x' | (x, y), a) v((x', y_0)) dx da
\end{aligned}
\end{equation}

Notice that $v_\pi$ is a fixed point of this operator, hence $\tilde{v}_\pi = v_\pi$ and and thus $m_{y_0} \in \M^\infty(\Pi)$ (since our particular choice of $\pi \in \Pi$ was arbitrary). Moreover, we can construct different models for each $y_0 \in \mathcal{Y}$, we know that
\begin{equation}
\label{eq:superfluous_one_to_one}
\M_\mathcal{Y} = \{m_y : y \in \mathcal{Y} \} \subseteq \mathcal{M}^\infty(\mathbbl{\Pi}).
\end{equation}

Moreover, suppose $m_{y_0} \in \M^1(\Pi, \FS)$.
This implies that for all $v \in \FS$
\begin{equation}
\begin{aligned}
&\tilde{\mathcal{T}}_\pi v((x,y)) = \mathcal{T}_\pi v((x, y)) \\
\implies &\int_{\A} \int_{\mathcal{X}} \int_{\mathcal{Y}} \pi(a | (x, y)) p_{m_{y_0}}((x', y') | (x, y), a) v((x', y')) dx' dy' da\\ 
& = \int_\A \int_{\mathcal{X}} \int_{\mathcal{Y}} \pi(a | (x, y)) p((x', y') | (x, y), a) v((x', y')) dx' dy' da \\
\implies &\int_\A \int_{\mathcal{X}} \pi(a | (x, y)) p(x'| (x, y), a) v((x', y_0)) dx' \\
&= \int_\A \int_{\mathcal{X}} \int_{\mathcal{Y}} \pi(a | (x, y))p((x', y') | (x, y), a) v((x', y')) dx' dy'
\end{aligned}
\end{equation}
we now choose $v((x, y)) = \boldsymbol{1}\{ y \neq y_0 \}$ which reduces the above equations to:
\begin{equation}
\label{eq:zero_prob}
\begin{aligned}
\implies &0 = \int_\A \int_{\mathcal{X}} \int_{\mathcal{Y} \neq y_0} \pi(a | (x, y))p((x', y') | (x, y), a) = \mathbb{P}(y' \neq y_0 | x, y, \pi) \\
\implies &\mathbb{P}(y' = y_0 | x, y, \pi) = 1
\end{aligned}
\end{equation}
where $\mathbb{P}$ denotes the conditional probability of an event.

Now consider the class of models defined by Eq.~(\ref{eq:superfluous_one_to_one}). 
Suppose $\M_\mathcal{Y} \in \M^1(\Pi, \FS)$, by Eq.~(\ref{eq:zero_prob}) this would mean that $\mathbb{P}(y' = y_0 | x, y, \pi) = 1$ for all $y_0 \in \mathcal{Y}$. 
This is impossible unless $|\mathcal{Y}| = 1$ hence there must exist $m_{y_0} \notin \M^1(\Pi, \FS)$ and thus $\M^1(\Pi, \FS) \subset \M^\infty(\Pi, \FS)$.

\end{proof}

\FPVEDetOptimal*
\begin{proof}
Denote a deterministic optimal policy with respect to the environment as $\pi^*$. Let $\tilde{m} \in \mathcal{M}^\infty(\mathbbl{\Pi}_\text{det})$ and $\tilde{\pi}^*$ be a deterministic optimal policy with respect $\tilde{m}$.

Suppose $\tilde{\pi}^*$ were not optimal in the environment.
This implies that $v_{\pi^*}(s) \geq v_{\tilde{\pi}^*}(s) \forall s \in \mathcal{S}$ with strict inequality for at least one state.
However, since $\pi^*$ and $\tilde{\pi}^*$ are deterministic
we have:
\begin{equation}
\tilde{v}_{\pi^*}(s) = v_{\pi^*}(s) > v_{\tilde{\pi}^*}(s) = \tilde{v}_{\tilde{\pi}^*}(s)
\end{equation}
for some $s \in \mathcal{S}$. This contradicts $\tilde{\pi}^*$ being optimal in the model.
\end{proof}

\FPVEDetBigger*

\begin{proof}
Since the environment and model only differ when action $R$ is taken from state $2$, we only need to consider deterministic policies that make this choice. Note that if action $R$ is taken from state $2$, the values in both the model and environment at states $2$ and $3$ are necessarily $0$ and the value of each in state $1$ is either $(1 - \gamma)^{-1}$ or $0$ depending on the action taken from state $1$. This suffices to show that the environment and model have the same values for all deterministic policies. 

However, one can see that the model and environment differ for stochastic policies. Take, for instance, a policy for which $\pi(a | s) = 0.5$ for all $a \in \A$, $s \in \S$. The induced Markov reward processes from applying this policy to the environment and model, which share the same reward structure, have different transition dynamics at state $2$. It can be easily verified that this results in different values for the environment and model.
\end{proof}

% \PVEDetBiggest* 
% \begin{proof}

% \end{proof}

\PVEBound*
\begin{proof}
We begin by considering the left-hand side
\begin{equation}
\begin{aligned}
\| v_\pi - \tilde{\mathcal{T}}^k_\pi v_\pi \|_{\infty} &= \| v_\pi - \tilde{\mathcal{T}}^k_\pi v + \tilde{\mathcal{T}}^k_\pi v - \tilde{\mathcal{T}}^k_\pi v_\pi  \|_{\infty} \\
&\leq \| v_\pi - \tilde{\mathcal{T}}^k_\pi v \|_{\infty} + \| \tilde{\mathcal{T}}^k_\pi v - \tilde{\mathcal{T}}^k_\pi v_\pi \|_{\infty} \\
&= \| v_\pi - \mathcal{T}^n_\pi v + \mathcal{T}^n_\pi v - \tilde{\mathcal{T}}^k_\pi v \|_{\infty} + \| \tilde{\mathcal{T}}^k_\pi v - \tilde{\mathcal{T}}^k_\pi v_\pi \|_{\infty} \\
&\leq \| v_\pi - \mathcal{T}^n_\pi v \|_{\infty} + \| \mathcal{T}^n_\pi v - \tilde{\mathcal{T}}^k_\pi v \|_{\infty} + \| \tilde{\mathcal{T}}^k_\pi v - \tilde{\mathcal{T}}^k_\pi v_\pi \|_{\infty} \\
&\leq \gamma^n \| v_\pi - v \|_{\infty} + \| \mathcal{T}^n_\pi v - \tilde{\mathcal{T}}^k_\pi v \|_{\infty} + \gamma^k \| v_\pi - v \|_{\infty} \\
&= (\gamma^k + \gamma^n) \| v_\pi - v \|_{\infty} + \| \mathcal{T}^n_\pi v - \tilde{\mathcal{T}}^k_\pi v \|_{\infty}
\end{aligned}
\end{equation}
as needed.
\end{proof}

\begin{proposition}
\label{prop:modified_pve_bound}
For any $\pi \in \PS$, $v \in \FS$ and $k, n \in \mathbb{Z}^+$, assuming $\| v_\pi - v \|_{\infty} < g \cdot \| v_\pi - v \|_{d_\pi}$ for some $g \geq 0$, we have that:
\begin{equation}
\| v_\pi - \tilde{\mathcal{T}}_\pi v_\pi \|_{d_\pi} \leq (g \cdot \gamma^k + \gamma^n) \| v_\pi - v \|_{d_\pi} + \| \mathcal{T}^n_\pi v - \tilde{\mathcal{T}}^k_\pi v \|_{d_\pi}
\end{equation}
\end{proposition}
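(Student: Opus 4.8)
The plan is to reproduce the telescoping argument used for Proposition~\ref{prop:pve_bound}, replacing the two places where that proof invoked the $\ell_\infty$-contraction of the Bellman operators with substitutes appropriate to the $d_\pi$-weighted norm. (Note that the left-hand side of the statement should read $\| v_\pi - \tilde{\mathcal{T}}^k_\pi v_\pi \|_{d_\pi}$, matching how the bound is applied in~\eqref{eq:upper_bound_of_upper_bound}.) First I would decompose $v_\pi - \tilde{\mathcal{T}}^k_\pi v_\pi = (v_\pi - \mathcal{T}^n_\pi v) + (\mathcal{T}^n_\pi v - \tilde{\mathcal{T}}^k_\pi v) + (\tilde{\mathcal{T}}^k_\pi v - \tilde{\mathcal{T}}^k_\pi v_\pi)$ and apply the triangle inequality for $\|\cdot\|_{d_\pi}$. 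The middle term is precisely $\epsilon_{ve}$ and needs no further work, so it remains to bound the first and third terms, each by a multiple of $\|v_\pi - v\|_{d_\pi}$.

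For the first term, since $v_\pi$ is the fixed point of $\mathcal{T}^n_\pi$ we have $v_\pi - \mathcal{T}^n_\pi v = \mathcal{T}^n_\pi v_\pi - \mathcal{T}^n_\pi v = \gamma^n \mathcal{P}^n_\pi(v_\pi - v)$. The key point is that, because $d_\pi$ is the stationary distribution of $\pi$ under the environment, $\mathcal{P}^n_\pi$ is a non-expansion in $\|\cdot\|_{d_\pi}$: applying Jensen's inequality pointwise gives $(\mathcal{P}^n_\pi x)(s)^2 \le \mathcal{P}^n_\pi(x^2)(s)$, and averaging over $d_\pi$ together with $d_\pi^\top \mathcal{P}^n_\pi = d_\pi^\top$ yields $\|\mathcal{P}^n_\pi x\|_{d_\pi} \le \|x\|_{d_\pi}$. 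Hence $\|v_\pi - \mathcal{T}^n_\pi v\|_{d_\pi} \le \gamma^n \|v_\pi - v\|_{d_\pi}$. For the third term, stationarity of $d_\pi$ is \emph{not} available for the model's transition operator $\tilde{\mathcal{P}}_\pi$, and this is exactly where the smoothness hypothesis is used: writing $\tilde{\mathcal{T}}^k_\pi v - \tilde{\mathcal{T}}^k_\pi v_\pi = \gamma^k \tilde{\mathcal{P}}^k_\pi(v - v_\pi)$, I would bound $\|\tilde{\mathcal{P}}^k_\pi(v-v_\pi)\|_{d_\pi} \le \|\tilde{\mathcal{P}}^k_\pi(v-v_\pi)\|_\infty \le \|v - v_\pi\|_\infty$ (using $\|\cdot\|_{d_\pi}\le\|\cdot\|_\infty$ for a probability weighting and that any stochastic operator is an $\ell_\infty$-non-expansion), and then invoke the assumption $\|v_\pi - v\|_\infty < g\,\|v_\pi - v\|_{d_\pi}$ to obtain $\|\tilde{\mathcal{T}}^k_\pi v - \tilde{\mathcal{T}}^k_\pi v_\pi\|_{d_\pi} \le g\gamma^k \|v_\pi - v\|_{d_\pi}$. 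Summing the three bounds gives $(g\gamma^k + \gamma^n)\|v_\pi - v\|_{d_\pi} + \|\mathcal{T}^n_\pi v - \tilde{\mathcal{T}}^k_\pi v\|_{d_\pi}$, as claimed.

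The only real subtlety — and the step I expect to require the most care — is keeping straight which transition operator can be handled via stationarity of $d_\pi$ (the environment operator, yielding the $\gamma^n$ term directly) and which must be routed through the $\ell_\infty$ norm and the constant $g$ (the model operator, yielding the $g\gamma^k$ term). Everything else is the same triangle-inequality telescoping as in Proposition~\ref{prop:pve_bound}, with $\|\cdot\|_\infty$ replaced by $\|\cdot\|_{d_\pi}$.
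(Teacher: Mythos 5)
Your proposal is correct and follows essentially the same route as the paper's proof: the identical three-term decomposition, with the environment term contracted directly in $\lVert\cdot\rVert_{d_\pi}$ and the model term routed through $\lVert\cdot\rVert_\infty$ and the assumption $\lVert v_\pi - v\rVert_\infty < g\,\lVert v_\pi - v\rVert_{d_\pi}$ to pick up the $g\gamma^k$ factor. You also correctly identify the typo in the statement (the left-hand side should carry the superscript $k$), and your explicit justification of the $d_\pi$-non-expansiveness of $\mathcal{P}^n_\pi$ via stationarity and Jensen's inequality is a detail the paper leaves implicit.
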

\begin{proof}
\begin{equation}
\begin{aligned}
\| v_\pi - \tilde{\mathcal{T}}^k_\pi v_\pi \|_{d_\pi} &= \| v_\pi - \tilde{\mathcal{T}}^k_\pi v + \tilde{\mathcal{T}}^k_\pi v - \tilde{\mathcal{T}}^k_\pi v_\pi  \|_{d_\pi} \\
&\leq \| v_\pi - \tilde{\mathcal{T}}^k_\pi v \|_{d_\pi} + \| \tilde{\mathcal{T}}^k_\pi v - \tilde{\mathcal{T}}^k_\pi v_\pi \|_{d_\pi} \\
&= \| v_\pi - \mathcal{T}^n_\pi v + \mathcal{T}^n_\pi v - \tilde{\mathcal{T}}^k_\pi v \|_{d_\pi} + \| \tilde{\mathcal{T}}^k_\pi v - \tilde{\mathcal{T}}^k_\pi v_\pi \|_{d_\pi} \\
&\leq \| v_\pi - \mathcal{T}^n_\pi v \|_{d_\pi} + \| \mathcal{T}^n_\pi v - \tilde{\mathcal{T}}^k_\pi v \|_{d_\pi} + \| \tilde{\mathcal{T}}^k_\pi v - \tilde{\mathcal{T}}^k_\pi v_\pi \|_{d_\pi} \\
&\leq \| v_\pi - \mathcal{T}^n_\pi v \|_{d_\pi} + \| \mathcal{T}^n_\pi v - \tilde{\mathcal{T}}^k_\pi v \|_{d_\pi} + \| \tilde{\mathcal{T}}^k_\pi v - \tilde{\mathcal{T}}^k_\pi v_\pi \|_\infty \\
&\leq \gamma^n \| v_\pi - v \|_{d_\pi} + \| \mathcal{T}^n_\pi v - \tilde{\mathcal{T}}^k_\pi v \|_{d_\pi} + \gamma^k \| v_\pi - v \|_\infty \\
&\leq \gamma^n \| v_\pi - v \|_{d_\pi} + \| \mathcal{T}^n_\pi v - \tilde{\mathcal{T}}^k_\pi v \|_{d_\pi} + g \cdot \gamma^k \| v_\pi - v \|_{d_\pi} \\
&= (g \cdot \gamma^k + \gamma^n) \| v_\pi - v \|_{d_\pi} + \| \mathcal{T}^n_\pi v - \tilde{\mathcal{T}}^k_\pi v \|_{d_\pi}
\end{aligned}
\end{equation}
\end{proof}

\subsection{Experimental details - illustrative experiments}
\label{sec:experimental_details}

\subsubsection{Code}
Code to reproduce our illustrative experiments can be found at \url{https://github.com/chrisgrimm/proper_value_equivalence}.

\subsubsection{Computational resources}
Illustrative experiments were performed on three machines each with $4$ NVIDIA GeForce GTX 1080 Ti graphics cards. 

\begin{wrapfigure}{R}{0.3\textwidth}
\vspace{-40pt}
\centering
\includegraphics[width=\linewidth]{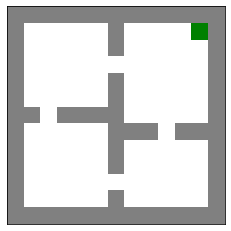}
\caption{Visualization of the Four Rooms environment. }
\label{fig:environments}
\vspace{-40pt}
\end{wrapfigure}

\subsubsection{Environment}
All illustrative experiments depicted in Figures \ref{fig:scatter_plots} and \ref{fig:env_pve_comparison} were carried out in a stochastic version of the Four Rooms environment (depicted in Figure~\ref{fig:environments}) where $|\S| = 104$ and $\A$ consists of four actions corresponding to an intended movement in each of the cardinal directions. When an agent takes an action, it will move in the intended direction 80\% of the time and otherwise move in a random direction. If the agent moves into a wall it will remain in place. When the agent transitions into the upper-right square it receives a reward of $1$, all other transitions yield $0$ reward. 

\subsubsection{Model representation and initialization}
\label{sec:model_rep}

Models are represented tabularly by matrices $\tilde{R} \in \mathbb{R}^{|\S| \times |\A|}$ and $\tilde{P}^a \in \mathbb{R}^{| \S | \times | \S |}$ for $a \in \A$ where $\tilde{R}_{s,a} = \tilde{r}(s,a)$ and $\tilde{P}^a_{s, s'} = \tilde{p}(s' | s, a)$. 
We generally constrain a matrix to be row-stochastic by parameterizing it with an unconstrained matrix of the same shape and applying a softmax with temperature $1$ to each of its rows. 
In experiments with model capacity constraints we additionally impose that each $\tilde{P}^a$ has a rank of at most $k$ by representing $\tilde{P}^a = D^a K^a$ where $D^a \in \mathbb{R}^{|\S| \times k}$, $K^a \in \mathbb{R}^{k \times |\S|}$ and both $D^a$ and $K^a$ are constrained to be row-stochastic (note that the product of row-stochastic matrices is itself row-stochastic). In this setting the parameters of the capacity-constrained transition dynamics are the unconstrained matrices parameterizing $D^a$ and $K^a$.

Models are initialized by randomly sampling the entries of $\tilde{R}$  according to $U(-1, 1)$ and the entries of the matrices parameterizing the transition dynamics according to $U(-5, 5)$, where $U(l, u)$ denotes a uniform distribution over the interval $(l, u)$. 

In all illustrative experiments we train our models using the Adam optimizer with default hyperparameters ($\beta_1 = 0.99$, $\beta_2 = 0.999$, $\epsilon$ =1e-8).  

\subsubsection{Model space experiments}
In Figure~\ref{fig:scatter_plots} we illustrate the properties of spaces of models trained to be in $\M^k(\PS, \FS)$ for $k \in \{1, 30, 40, 50, 60\}$ and in $\M^\infty(\PS)$. To train each of these models we construct a set of policies and functions $\mathcal{D} = \{ (\pi_i, v_i) \}_{i=1}^{100,000}$. Each generated policy $\pi_i$ is, with equal probability, either a uniformly sampled deterministic policy or a stochastic policy for which at each state $s$,  $\pi_i(a | s) = f_a / \sum_{a \in \A} f_a$ where $f_a \sim U(0, 1)$ for each $a \in \A$. Each $v_i$ is sampled such that $v_i(s) \sim U(-1, 1)$ for each $s \in \S$. We then sample minibatches $B \sim \mathcal{D}$ with $|B| = 50$ at each iteration and update models to minimize
\begin{equation}
\label{eq:sampled_losses}
\frac{1}{|B|} \sum_{(\pi, v) \in B} (\tilde{\mathcal{T}}^k_{\pi} v - \mathcal{T}^k_{\pi} v)^2 \hspace{10pt}\text{ and }\hspace{10pt} \frac{1}{|B|} \sum_{(\pi, v) \in B} (\tilde{\mathcal{T}}_\pi v_\pi - v_\pi)^2
\end{equation}
for order-$k$ VE and PVE models respectively. 

Each model is updated in this manner for $500,000$ iterations with a learning rate of 1e-3 and a snapshot of the model is stored every $1000$ iterations---creating a timeline of the model's progress through training. For each model class, this experiment is repeated with $120$ randomly initialized models. 
To generate the points on the scatter plots depicted in Figure~\ref{fig:scatter_plots}, we iterate through the snapshots of these $120$ models. At snapshot $t$ (training iteration $1000 \times t$) we collect the snapshots of all the models and convert each model into a 1D vector representation by concatenating the entries from its reward and transition dynamics matrices. We then apply principle component analysis to these vectors, isolating the first two principle components, which we treat as (x, y) coordinates in the scatter plots. 
For the top row in Figure~\ref{fig:scatter_plots} we color these points according to progress through training: $(t / 500)$. On the bottom row, we compute the optimal policy with respect to each point's corresponding model: $\tilde{\pi}^*$ and color the point according to $(\sum_{s} v_{\tilde{\pi}^*}(s)) / (\sum_{s} v_{\pi*}(s))$.

We produce the plot of model class diameters in Figure~\ref{fig:scatter_plots} by taking the scatter-plot points corresponding to the final snapshot ($t = 500$) of models for each $k$, randomly grouping them into $4$ sets of $30$ points and computing the diameters of each set. We then use these 4 diameters to produce error bars.

\subsubsection{Individual model visualization}
To generate the visualization of the dynamics of individual models displayed in Figure~\ref{fig:env_pve_comparison}b, we randomly select a single PVE model trained in our model space experiments. We then collect $5000$ length $30$ trajectories starting from the bottom left state. The paths of these trajectories are then overlaid on top of a visualization of the environment and colored according to time along the trajectory ($t / 30$). This procedure is repeated using the environment in Figure~\ref{fig:env_pve_comparison}a.

\subsubsection{Model capacity experiment}
We compare the effect of capacity constraints on learning models in $\M^\infty(\PS)$ and $\M^\infty(\PSdet)$ respectively by restricting the rank of the learned model's transition dynamics (as in \ref{sec:model_rep}). We restrict the ranks of model transition dynamics to be at most $k$ for $k \in \{ 20, 30, 40, 50, 60, 70, 80, 90, 100, 104 \}$. 
To train each model we collect a set of $1000$ policies by beginning with a random policy and repeatedly running the policy iteration algorithm in the environment, starting with a randomly initialized policy and stopping when the optimal policy is reached. The sequence of improved policies resulting from this process is stored. Whenever the algorithm terminates, a new random policy is generated and the process is repeated until $1000$ policies have been stored. 
To increase the number of distinct policies generated by this process, at each step of policy iteration, we select, uniformly at random, 10\% of states and update the policy at only these states.
We then further boost the breadth of our collected policies and specialize them to $\PS$ and $\PSdet$ by adding stochastic or deterministic ``noise.''

Precisely, when training a model to be in $\M^\infty(\PS)$ we iterate over each of the 1000 policies generated by our policy iteration procedure and generate an additional 100 policies. Each additional policy is generated by selecting, uniformly at random, 10\% of the original policy's states and replacing the its distribution at these states with a uniform distribution over actions. 

When training a model to be in $\M^\infty(\PSdet)$ the same procedure is repeated but the original policy's distributions, at the selected states, are replaced by randomly generated deterministic distributions.

In either case, this produces $100,000$ policies which are evaluated in the environment. Together this forms a set of policies and corresponding value functions: $\mathcal{D} = \{ (\pi_i, v_i) \}_{i=1}^{100,000}$ which can be used construct mini-batch PVE losses as described in \eqref{eq:sampled_losses}. Models are trained according to these losses for $1,000,000$ iterations with a learning rate of 5e-4. The errorbars around the environment value of the models' optimal policies at the end of training are reported across $10$ seeds. 

\subsection{MuZero experiment}

\paragraph{Atari.}
We follow the Atari configuration used in \citet{schrittwieser2019mastering}, summarised in Table \ref{tab:atari_hyperparams}.

\begin{table}[h]
\caption{Atari hyperparameters.}
\label{tab:atari_hyperparams}
% \vskip 0.15in
\begin{center}
\begin{small}
\begin{tabular}{ll}
\toprule
\textsc{Parameter} & \textsc{Value} \\
\midrule
  Start no-ops & [0, 30] \\ 
  Terminate on life loss &  Yes \\
  Action set & Valid actions \\
  Max episode length & 30 minutes (108,000 frames) \\
  Observation size & $96\times96$ \\
  Preprocessing & Grayscale \\
  Action repetitions & 4 \\
  Max-pool over last N action repeat frames & 4 \\
  Total environment frames, including skipped frames & 500M \\
\bottomrule
\end{tabular}
\end{small}
\end{center}
\vskip -0.1in
\end{table}  

\paragraph{MuZero implementation.}
Our MuZero implementation largely follows the description given by \citet{schrittwieser2019mastering}, but uses a Sebulba distributed architecture as described in \citet{hessel2021b}, and TD($\lambda$) rather than $n$-step value targets.
The hyperparameters are given in Table \ref{tab:muzero_hyperparams}.
Our network architecture is the same as used in MuZero \citep{schrittwieser2019mastering}.

The base MuZero loss is given by
\begin{equation}
    \mathcal{L}^{\textrm{base}}_t = \mathbb{E}_{\pi} \sum_{k=0}^K\left[ \ell^r(r^{\text{target}}_{t+k}, \hat{r}_t^k) + \ell^v(v_{t+k}^{\text{target}}, \hat{v}_t^k) + \ell^\pi(\pi_{t+k}^{\text{target}}, \hat{\pi}_t^k)\right].
    \label{eq:mz-loss}
\end{equation}

The reward loss $\ell^r$ simply regresses the model-predicted rewards to the rewards seen in the environment.
To compute the value and policy losses, MuZero performs a Monte-Carlo tree search using the learned model.
The policy targets are proportional to the MCTS visitation counts at the root node.
The value targets are computed using the MCTS value prediction $\tilde{v}$ and the sequences of rewards.
MuZero uses an $n$-step bootstrap return estimate $v^{\text{target}}_{t+k} = \sum_{j=1}^{n}\gamma^{j-1}r_{t+k+j} + \gamma^{n}\tilde{v}_{t+k+n}$.
We use the a TD($\lambda$) return estimate instead.

For our additional loss corresponding to past policies, we periodically store the parameters for the value function and policy (i.e. the network heads that take the model-predicted latent state as input).
Then, we compute the same value loss $\ell^v$ for each past value function.
To account for the fact that the reward sequence was drawn from the current policy $\pi$ rather than the stored policies, we use V-trace to compute a return estimate for the past policies.

The additional hyperparameters for the buffer of past value heads were tuned on MsPacman, over a buffer size in $\{64, 128, 256\}$ and an update interval in $\{10, 50, 100, 500\}$.
Our experiments took roughly 35k TPU-v3 device-hours for both tuning and the full evaluation.

\paragraph{Full results.}
We report the final scores per game in Table \ref{tab:atari_full}. The mean scores are across the final 200 episodes in each of three seeds.
We also report the standard error of the mean across seeds only. 
Performing a Wilcoxon signed rank test comparing per-game scores, we find that the version with the additional Past Policies loss has a better final performance with $p=0.044$.

\begin{table}[h]
\caption{Hyperparameters for our MuZero experiment.}
\label{tab:muzero_hyperparams}
\vskip 0.1in
\begin{center}
\begin{small}
\begin{tabular}{ll}
\toprule
\textsc{Hyperparameter} & \textsc{Value} \\
\midrule
  Batch size & 96 sequences \\
  Sequence length & 30 frames \\
  Sequence overlap & 10 frames \\
  Model unroll length $K$ & 5 \\ 
  Optimiser & Adam \\
  Initial learning rate & $1\times10^{-4}$ \\
  Final learning rate (linear schedule) & 0 \\
  Discount & 0.997 \\
  Target network update rate & 0.1 \\
  Value loss weight & 0.25 \\
  Reward loss weight & 1.0 \\
  Policy loss weight & 1.0 \\
  MCTS number of simulations & 25 \\
  $\lambda$ for TD($\lambda$) & 0.8 \\
  MCTS Dirichlet prior fraction & 0.3 \\
  MCTS Dirichlet prior $\alpha$ & 0.25 \\
  Search parameters update rate & 0.1 \\
  Value, reward number of bins & 601 \\
  Nonlinear value transform & $\text{sgn}(z)(\sqrt{|z| +1} - 1) + 0.01z$ \\
\midrule
  Value buffer size & 128 \\
  Value buffer update interval & 50 \\
  Value buffer loss weight & 0.25 \\
\bottomrule
\end{tabular}
\end{small}
\end{center}
\vskip -0.1in
\end{table}

\begin{table}[h]
\centering
\begin{tabular}{l|rl|rl}
\toprule
\multicolumn{1}{l}{Environment} & \multicolumn{2}{c}{MuZero (our impl.)} & \multicolumn{2}{c}{MuZero + Past Policies} \\
\midrule
alien & 38,698 & $\pm$ 2,809 & \textbf{52,821} & $\pm$ 1,918 \\
amidar & \textbf{6,631} & $\pm$ 568 & 4,239 & $\pm$ 1,550 \\
assault & 35,876 & $\pm$ 550 & 35,013 & $\pm$ 738 \\
asterix & \textbf{674,573} & $\pm$ 88,318 & 549,421 & $\pm$ 9,280 \\
asteroids & 214,034 & $\pm$ 4,719 & \textbf{235,543} & $\pm$ 14,605 \\
atlantis & 835,445 & $\pm$ 92,290 & 845,409 & $\pm$ 60,318 \\
bank\_heist & 837 & $\pm$ 265 & 552 & $\pm$ 234 \\
battle\_zone & 39,471 & $\pm$ 12,658 & \textbf{72,183} & $\pm$ 11,385 \\
beam\_rider & 120,675 & $\pm$ 16,588 & 130,129 & $\pm$ 14,014 \\
berzerk & 22,449 & $\pm$ 3,780 & \textbf{35,249} & $\pm$ 3,179 \\
bowling & \textbf{59} & $\pm$ 0 & 47 & $\pm$ 7 \\
boxing & 99 & $\pm$ 0 & 99 & $\pm$ 0 \\
breakout & 504 & $\pm$ 165 & \textbf{770} & $\pm$ 12 \\
centipede & 400,268 & $\pm$ 32,821 & \textbf{534,432} & $\pm$ 38,912 \\
chopper\_command & 524,655 & $\pm$ 154,540 & 660,503 & $\pm$ 27,000 \\
crazy\_climber & 189,621 & $\pm$ 7,313 & \textbf{217,204} & $\pm$ 12,764 \\
defender & 322,472 & $\pm$ 105,043 & \textbf{483,394} & $\pm$ 11,589 \\
demon\_attack & 131,963 & $\pm$ 3,819 & 112,140 & $\pm$ 17,739 \\
double\_dunk & 3 & $\pm$ 4 & -1 & $\pm$ 1 \\
enduro & 0 & $\pm$ 0 & \textbf{132} & $\pm$ 86 \\
fishing\_derby & -97 & $\pm$ 0 & \textbf{-52} & $\pm$ 29 \\
freeway & 0 & $\pm$ 0 & 0 & $\pm$ 0 \\
frostbite & 3,439 & $\pm$ 1,401 & \textbf{8,049} & $\pm$ 526 \\
gopher & \textbf{121,984} & $\pm$ 338 & 120,551 & $\pm$ 923 \\
gravitar & 2,807 & $\pm$ 123 & \textbf{3,927} & $\pm$ 54 \\
hero & 7,877 & $\pm$ 960 & \textbf{9,871} & $\pm$ 523 \\
ice\_hockey & -6 & $\pm$ 4 & -11 & $\pm$ 3 \\
jamesbond & \textbf{23,475} & $\pm$ 1,586 & 13,668 & $\pm$ 4,480 \\
kangaroo & 9,659 & $\pm$ 2,389 & 10,465 & $\pm$ 2,835 \\
krull & 11,259 & $\pm$ 173 & 11,295 & $\pm$ 108 \\
kung\_fu\_master & 55,242 & $\pm$ 4,267 & \textbf{83,705} & $\pm$ 6,565 \\
montezuma\_revenge & 0 & $\pm$ 0 & 0 & $\pm$ 0 \\
ms\_pacman & 40,263 & $\pm$ 387 & \textbf{43,700} & $\pm$ 1,042 \\
name\_this\_game & 76,604 & $\pm$ 7,107 & \textbf{94,974} & $\pm$ 9,942 \\
phoenix & 67,119 & $\pm$ 9,747 & 49,919 & $\pm$ 10,573 \\
pitfall & \textbf{-2} & $\pm$ 1 & -24 & $\pm$ 7 \\
pong & -7 & $\pm$ 9 & -6 & $\pm$ 9 \\
private\_eye & 193 & $\pm$ 101 & -6 & $\pm$ 228 \\
qbert & 64,732 & $\pm$ 8,619 & 70,593 & $\pm$ 16,955 \\
riverraid & 27,688 & $\pm$ 1,001 & 28,026 & $\pm$ 1,823 \\
road\_runner & 151,639 & $\pm$ 90,186 & \textbf{571,829} & $\pm$ 106,184 \\
robotank & \textbf{53} & $\pm$ 2 & 25 & $\pm$ 8 \\
seaquest & 27,530 & $\pm$ 10,632 & \textbf{141,725} & $\pm$ 48,000 \\
skiing & \textbf{-27,968} & $\pm$ 1,346 & -30,062 & $\pm$ 248 \\
solaris & 1,544 & $\pm$ 140 & 1,501 & $\pm$ 193 \\
space\_invaders & 3,962 & $\pm$ 102 & \textbf{5,367} & $\pm$ 953 \\
star\_gunner & 663,896 & $\pm$ 80,698 & 547,226 & $\pm$ 126,538 \\
surround & 7 & $\pm$ 0 & 6 & $\pm$ 1 \\
tennis & -23 & $\pm$ 0 & \textbf{0} & $\pm$ 0 \\
time\_pilot & \textbf{267,331} & $\pm$ 15,256 & 228,282 & $\pm$ 10,844 \\
tutankham & 134 & $\pm$ 10 & 150 & $\pm$ 8 \\
up\_n\_down & 434,746 & $\pm$ 3,905 & 432,240 & $\pm$ 4,221 \\
venture & 0 & $\pm$ 0 & 0 & $\pm$ 0 \\
video\_pinball & 376,660 & $\pm$ 37,647 & 378,897 & $\pm$ 26,486 \\
wizard\_of\_wor & \textbf{79,425} & $\pm$ 1,458 & 54,093 & $\pm$ 6,294 \\
yars\_revenge & 317,803 & $\pm$ 62,785 & 423,271 & $\pm$ 54,094 \\
zaxxon & 15,752 & $\pm$ 231 & 15,790 & $\pm$ 196 \\
\bottomrule
\end{tabular}
\caption{Final Atari scores for our deep RL experiments. We report the mean of the final 200 episodes over all three seeds, and the standard error of the mean across seeds.}
\label{tab:atari_full}
\end{table}
\end{document}